\definecolor{LightCyan}{rgb}{0.8, 0.9, 1}
\definecolor{ao}{rgb}{0.0, 0.5, 0.0}
\newcommand{\method}{\texttt{SPIN}}
\def\CC{\textcolor{black}}
\titlespacing*{\section}{0pt}{*0.5}{*0.5}
\titlespacing*{\subsection}{0pt}{*0.5}{*0.5}
\titlespacing*{\subsubsection}{0pt}{*0.5}{*0.5}
\begin{document}

\twocolumn[
\icmltitle{Self-Play Fine-Tuning Converts Weak Language Models to Strong Language Models}



\icmlsetsymbol{equal}{*}

\begin{icmlauthorlist}
\icmlauthor{Zixiang Chen}{equal,ucla}
\icmlauthor{Yihe Deng}{equal,ucla}
\icmlauthor{Huizhuo Yuan}{equal,ucla}
\icmlauthor{Kaixuan Ji}{ucla}
\icmlauthor{Quanquan Gu}{ucla}
\end{icmlauthorlist}

\icmlaffiliation{ucla}{Department of Computer Science, University of California, Los Angeles, CA 90095, USA}

\icmlcorrespondingauthor{Quanquan Gu}{qgu@cs.ucla.edu}

\icmlkeywords{Machine Learning, ICML}

\vskip 0.3in
]



\printAffiliationsAndNotice{\icmlEqualContribution} 

\begin{abstract}%
 Harnessing the power of human-annotated data through Supervised Fine-Tuning (SFT) is pivotal for advancing  Large Language Models (LLMs). In this paper, we delve into the prospect of growing a strong LLM out of a weak one without the need for acquiring additional human-annotated data. We propose a new fine-tuning method called Self-Play fIne-tuNing ($\method$), which starts from a supervised fine-tuned model.
 At the heart of $\method$ lies a self-play mechanism, where the LLM refines its capability by playing against instances of itself. More specifically, the LLM generates its own training data from its previous iterations, refining its policy by discerning these self-generated responses from those obtained from human-annotated data. Our method progressively elevates the LLM from a nascent model to a formidable one, unlocking the full potential of human-annotated demonstration data for SFT. 
 Theoretically, we prove that the global optimum to the training objective function of our method is achieved only when the LLM policy aligns with the target data distribution. Empirically, we evaluate our method on several benchmark datasets including the HuggingFace Open LLM Leaderboard, MT-Bench, and datasets from Big-Bench. Our results show that $\method$ can significantly improve the LLM's performance across a variety of benchmarks and even outperform models trained through direct preference optimization (DPO) supplemented with extra GPT-4 preference data. This sheds light on the promise of self-play, enabling the achievement of human-level performance in LLMs without the need for expert opponents. \CC{Codes
are available at \url{https://github.com/uclaml/SPIN}.} 

\end{abstract}

\section{Introduction}

Large Language Models (LLMs) have began a  groundbreaking era in artificial general intelligence (AGI), demonstrating extraordinary capabilities across a wide range of domains that require intricate reasoning and specialized knowledge. 
These models excel in areas such as mathematical reasoning/problem solving~\citep{cobbe2021training, wei2022chain,lewkowycz2022solving}, code generation/programming~\citep{chen2021evaluating, austin2021program, li2022competition}, text generation~\citep{bubeck2023sparks, anil2023palm, touvron2023llama}, summarization and creative writing, among others. 
A significant advancement in LLMs is the post-pre-training alignment with the more desirable behaviors~\citep{mishra2021cross, victor2022multitask, chung2022scaling, thoppilan2022lamda}, a process often reliant on the costly human-annotated data.
Typical alignment methods include Supervised Fine-Tuning (SFT)~\citep {ouyang2022training, tunstall2023zephyr} based on human demonstrations, and Reinforcement Learning from Human Feedback (RLHF)~\citep{christiano2017deep,ziegler2019fine, stiennon2020learning, bai2022training}  based on human preferences.

All the aforementioned alignment methods require a substantial volume of human annotated data. Therefore, there is increasing interest in developing fine-tuning methods that can effectively utilize human data, thereby streamlining the alignment process. This motivates us to study fine-tuning LLMs without the need for additional human-annotated data beyond the fine-tuning dataset. Our study is also related to the broader goal of converting weak models to strong models without the requirement for extra training data, which is of central interest in machine learning that can be traced back to the boosting algorithms \citep{kearns1994cryptographic,schapire1990strength,freund1995boosting,freund1997decision}. The self-training algorithm \citep{vapnik1999nature,grandvalet2004semi,lee2013pseudolabel} has also been proved to be able to convert weak learners to strong learners in mixture models without the need for additional labeled data \citep{frei2022self,kou2022does}. 
However, the pursuit of autonomously enhancing a weak LLM without external guidance is both intriguing and understudied. 
This raises the following question: 
\begin{center}
\textit{Can we empower a weak LLM to improve itself without acquiring additional human annotated data?}
\end{center}

In this paper, we answer this question affirmatively. Inspired by the success of self-play mechanisms~\citep{samuel2000some} in games, exemplified by AlphaGo Zero~\citep{silver2017mastering}, AlphaZero~\citep{silver2017masteringchess}, with historical roots traced back to TD-Gammon~\citep{tesauro1995temporal}, we propose to convert a weak LLM to a strong one through the lens of self-play, where the model is enhanced by playing against itself without requiring any direct supervision. In particular, we propose a novel fine-tuning method called Self-Play fIne-tuNing ($\method$), which begins from a supervised fine-tuned model. 
$\method$ allows the LLM to engage in self-play, eliminating the need for an expert annotator such as a human or more advanced LLMs like GPT-4.
In detail, with the LLM from previous iteration $t$ denoted by $p_{\btheta_t}$, we employ it to generate responses $\yb'$ to the prompts $\xb$ in the human-annotated SFT dataset. 
The subsequent objective is to find a new LLM $p_{\btheta_{t+1}}$, capable of distinguishing the responses $\yb'$ generated by $p_{\btheta_t}$ from the responses $\yb$ generated by humans. 
This process can be seen as a two-player game: the main player, or the new LLM $p_{\btheta_{t+1}}$, seeks to discern between the responses of the opponent player $p_{\btheta_t}$ and human-generated responses, 
while the opponent, 
or the old LLM $p_{\btheta_t}$, generates responses as similar as possible to those in the human-annotated SFT dataset.
The new LLM $p_{\btheta_{t+1}}$ is obtained by fine-tuning the old one $p_{\btheta_t}$ to prefer responses from $p_{\mathrm{data}}$ over $p_{\btheta_t}$, resulting in a distribution $p_{\btheta_{t+1}}$ that is more aligned with $p_{\mathrm{data}}$.
In the next iteration, the newly obtained LLM $p_{\btheta_{t+1}}$ becomes the opponent for response generation, with the self-play process aiming for the LLM to eventually converge to $p_{\btheta^*}= p_{\mathrm{data}}$, so that the strongest possible LLM can no longer differentiate the responses generated by its previous version and those generated by the human.

Interestingly, our method exhibits similarity with the recently introduced direct preference optimization (DPO) method~\citep{rafailov2023direct},  with the notable distinction being the self-play nature of our method. Consequently, our approach stands out by eliminating the need for extra human preference data, a requirement present in the DPO method.
Additionally, the self-play mechanism in our method resembles the idea of generative adversarial networks (GAN)~\citep{goodfellow2014generative,arjovsky2017wasserstein}, albeit that both the discriminator (main player) and the generator (the opponent) in our method are instances of the same LLM  from different iterations. Theoretically, we prove that our method converges when the distribution of the LLM is identical to the target data distribution, i.e., $p_{\btheta_t} = p_{\mathrm{data}}$.
Our experimental results on \texttt{zephyr-7b-sft-full}~\citep{tunstall2023zephyr}, a fine-tuned LLM based on Mistral-7B~\citep{jiang2023mistral}, show that while continued training using SFT on its own SFT dataset Ultrachat200k~\citep{ding2023enhancing} reaches a performance plateau or even diminished evaluation scores, our method consistently improves \texttt{zephyr-7b-sft-full} across successive iterations while leveraging only a $\mathbf{50}$k subset of  Ultrachat200k dataset.  
Ultimately, $\method$ effectively improves the base model's average score from $58.14$ to $\mathbf{63.16}$ on the HuggingFace Open LLM Leaderboard~\citep{open-llm-leaderboard} with remarkable $10\%$+ improvement in scores on GSM8k and TruthfulQA, and from $5.94$ to $\mathbf{6.78}$ on MT-Bench~\citep{zheng2023judging}.
Notably, $\method$ achieves results that are even comparable to models trained on additional $62$k preference dataset~\citep{tunstall2023zephyr} on Open LLM leaderboard and MT-Bench.

Concurrent to our work, \citet{singh2023beyond} proposed the use of synthetic data with binary feedback in self-training, reducing the reliance on human data. In contrast, our approach eliminates the need for additional binary feedback from humans or an extra reward model thanks to the self-play mechanism. 
Additionally, \citet{burnsweak} employed a weak LLM model as the guidance to train stronger LLMs in a fashion of weak-to-strong generation. Unlike \citet{burnsweak}, which necessitates both a weak supervisor and a strong model, our $\method$ operates effectively with a single LLM.

\noindent\textbf{Notation.} We use lowercase letters and lowercase boldface letters to denote scalars and vectors, respectively. We use $[N]$ to denote the index set $\{1, \dots, N\}$. In the function space, let $\cF$ be the function class. The symbol $q_{\mathrm{data}}$ designates the target data distribution, while $p$ represents the conditional probability of LLM's response (i.e., LLM policy).

\section{Related Work}
\noindent \textbf{Self-Play.} 
Self-play~\citep{samuel1959some,tesauro1995temporal}, where the algorithm learns by playing against itself, has gained notable attention due to its effectiveness in multi-agent reinforcement learning (MARL). 
This method involves agents engaging in interactions with copies of themselves, enabling an increasing level of challenge and complexity within the learning environment. 
A fundamental work in the field of self-play is AlphaGo Zero~\citep{silver2017mastering}, which demonstrated exceptional performance against human players using a self-play learning scheme. 
Subsequent research has expanded upon the concept of self-play, exploring various adaptations and implementations~\citep{anthony2017thinking,lanctot2017unified,bansal2018emergent,hernandez2018multiagent,muller2019generalized,alphastarblog}.  
Our method takes the self-play approach akin to AlphaGo Zero, which can convert a weak model to a strong one without additional human-annotated data.
While the effectiveness of self-play in MARL is well-established, to our knowledge, our work is the first to apply this approach to the enhancement of LLMs. 

\noindent \textbf{Synthetic Data for LLMs.} 
In the context of supervised fine-tuning (SFT) of LLMs, human-crafted data has proven to be a remarkably effective source that enhances the performance of LLMs on tasks such as code generation~\citep{roziere2023code,yang2023decoding} and mathematical reasoning~\citep{yuan2023scaling,luo2023wizardmath}. 
While human data typically exhibits high quality, acquiring sufficient amount of such data poses a challenge in cost. 
In light of  this consideration, the use of synthetic data has become increasingly popular and considered as a proxy for human data. 
This approach primarily leverages advanced LLMs such as the GPT series~\citep{radford2019language,brown2020language,openai2023gpt4} as the guidance to generate high-quality data~\citep{josifoski2023exploiting,alpaca,vicuna2023,li2023textbooks}. 
Recent research has also highlighted the rephrasing capability of LLMs in prompting for better LLM response~\citep{deng2023rephrase,prasad2023rephrase} as well as augmenting synthetic data for more effective SFT~\citep{yu2023metamath,liu2023tinygsm}. 
In contrast to prior studies that utilized more advanced models for synthetic data generation when pre-training or fine-tuning a target model, our approach directly generates synthetic data from the target model itself.

\section{Problem Setting and Preliminaries}\label{section:problemsetting}
We consider a Large Language Model (LLM) parameterized by $\btheta$ and denoted by $p_{\btheta}$. The model takes as input a sequence $\xb = [x_1, \ldots, x_n]$, commonly referred to as the prompt, to generate the corresponding response $\yb = [y_1, \ldots, y_m]$. The response $\yb$ is therefore considered as a sample from the conditional probability distribution $p_{\btheta}(\cdot|\xb)$. In LLMs, $x_i$ and $y_j$ represent individual tokens from a predetermined vocabulary within the sequences $\xb$ and $\yb$, respectively. 
The auto-regressive model $p_{\btheta}$ generates tokens sequentially for a given position, leveraging only the sequence of previously generated tokens. 
This model therefore constitutes a Markov process, where the conditional probability distribution $p_{\btheta}(\yb|\xb)$ can be expressed through a decomposition as follows:
\begin{align*}
p_{\btheta}(\yb|\xb) = \prod_{j=1}^{m}p_{\btheta}(y_{j}|\xb, \yb_{<j}),   
\end{align*}
where $\yb_{<1}$ is null and $\yb_{<j} = [y_1,\ldots, y_{j-1}]$ for $j=2,\ldots,m$. 
In the following, we review two major fine-tuning methods for LLMs: supervised fine-tuning and reinforcement learning (RL) fine-tuning.

\subsection{Supervised Fine-Tuning}
Supervised fine-tuning (SFT) is employed to tailor a pre-trained LLM to specific downstream tasks, leveraging  relatively smaller dataset of labeled examples in comparison to the large-scale pre-training data~\citep{ouyang2022training,yu2023metamath}. 
In this context, we consider a specific task where the prompts, denoted by $\xb$, are derived from a specified distribution $q(\cdot)$.  
The notation $p_{\mathrm{data}}(\cdot|\xb)$ then represents the probability distribution of the associated high-quality responses $\yb$ from the training data. 
Consequently, SFT involves training the LLM to minimize the following negative log-likelihood loss associated with these distributions, 
\begin{align}
L_{\mathrm{SFT}}(\btheta) = -\EE_{\xb\sim q(\cdot), \yb\sim p_{\mathrm{data}}(\cdot|\xb)}\Big[\log p_{\btheta}\big(\yb|\xb\big)\Big].\label{eq:sft}   
\end{align}
It should be noted that excluding $\xb\sim q(\cdot)$ from the expectation term yields the typical cross-entropy loss, expressed as $-\mathbb{E}_{\yb\sim p_{\mathrm{data}}(\cdot|\xb)}[\log p_{\btheta}(\yb|\xb)]$. 
$L_{\mathrm{SFT}}(\btheta)$ attains its minimum when the model's predictive distribution $p_{\btheta}(\yb|\xb)$ aligns perfectly with the distribution of the labeled high-quality responses $p_{\mathrm{data}}(\yb|\xb)$. 

Consequently, the LLM after SFT is anticipated to generate responses that closely resemble those from $p_{\mathrm{data}}(\yb|\xb)$. 
This procedure is therefore expected to significantly enhance the model's performance in generating appropriate responses for a specific task.

\subsection{RL Fine-Tuning}
RL fine-tuning~\citep{christiano2017deep,bai2022training,gao2023scaling} 
offers another method for enhancing the specific capabilities of general-purpose pre-trained models. 
Typically, RL fine-tuning is employed subsequent to SFT to achieve improved alignment for LLMs~\citep{tunstall2023zephyr}. 

For a given sequence pair $(\xb, \yb)$, RL fine-tuning necessitates a deterministic reward function $r(\xb,\yb)$. 
The higher the reward $r(\xb, \yb)$, the better the response $\yb$ is to the given prompt $\xb$. The objective of the RL fine-tuning process is then to maximize the following objective function: 
\begin{align*}
L_{\mathrm{RL}}(\btheta) &= \EE_{\xb\sim q(\cdot), \yb\sim p_{\btheta}(\cdot|\xb)}[r(\xb,\yb)]\\
&\qquad- \lambda \EE_{\xb\sim q(\cdot)}\mathrm{KL}\big(p_{\btheta}(\cdot|\xb)||p_{\mathrm{ref}}(\cdot|\xb)\big),   
\end{align*}
where the Kullback-Leibler (KL) regularization enforces the new model $p_{\btheta}$ to be close to the reference model $p_{\mathrm{ref}}$, and $\lambda>0$ is the regularization parameter to control the deviation of the new model $p_{\btheta}$ from the reference model $p_{\mathrm{ref}}$. 
In practice, the reference model $p_{\mathrm{ref}}$ is often initialized as the supervised fine-tuned model. 
The inclusion of KL regularization is vital for preventing excessive deviation from the reference model, which in turn reduces the risk of mode collapse. 

Meanwhile, the primary challenge in RL fine-tuning lies in finding a good reward function. 
Typically, this function requires training on a preference dataset. 
The compilation of such a dataset demands significant resources, often involving comprehensive evaluations either by human annotators, i.e., reinforcement learning from human feedback (RLHF)~\citep{christiano2017deep,bai2022training} or strong AI agents, i.e., reinforcement learning from AI feedback (RLAIF)~\citep{bai2022constitutional}.

\section{Method} \label{sec: main}

In this section, we introduce a new fine-tuning method for enhancing the performance of LLMs without relying on additional human or AI feedback. Consider a high-quality supervised fine-tuning (SFT) dataset $S_{\mathrm{SFT}} = \{(\xb, \yb)\}_{i=1}^{n}$, which are sampled from the marginal distribution $q(\xb)$ and $p_{\mathrm{data}}(\yb|\xb)$. Given a supervised fine-tuned LLM $p_{\btheta_{0}}$, further application of the SFT approach in \eqref{eq:sft} with $S_{\mathrm{SFT}}$ will be ineffective and potentially lead to worse performance. In addition, without human and/or AI feedback, it becomes infeasible to acquire a preference dataset for RL fine-tuning (e.g., RLHF and RLAIF). This hinders the application of RL fine-tuning techniques.

We evaluate $p_{\btheta_0}$ against $S_{\mathrm{SFT}}$, where $p_{\btheta_0}$ is the LLM achieved by SFT using \eqref{eq:sft}. We notice a persistent quality gap between the groundtruth response $\yb$ from $S_{\mathrm{SFT}}$ and the LLM-generated response $\yb' \sim p_{\btheta}(\cdot|\xb)$ (refer to Figure \ref{fig:observation}). This disparity indicates that there is still room for improvement over $p_{\btheta_0}$. Therefore,  we propose to use the synthetic data generated by the LLM to enhance LLM's performance starting from $p_{\btheta_0}$ iteratively.

\begin{figure*}
    \centering
    \includegraphics[width=1.\textwidth]{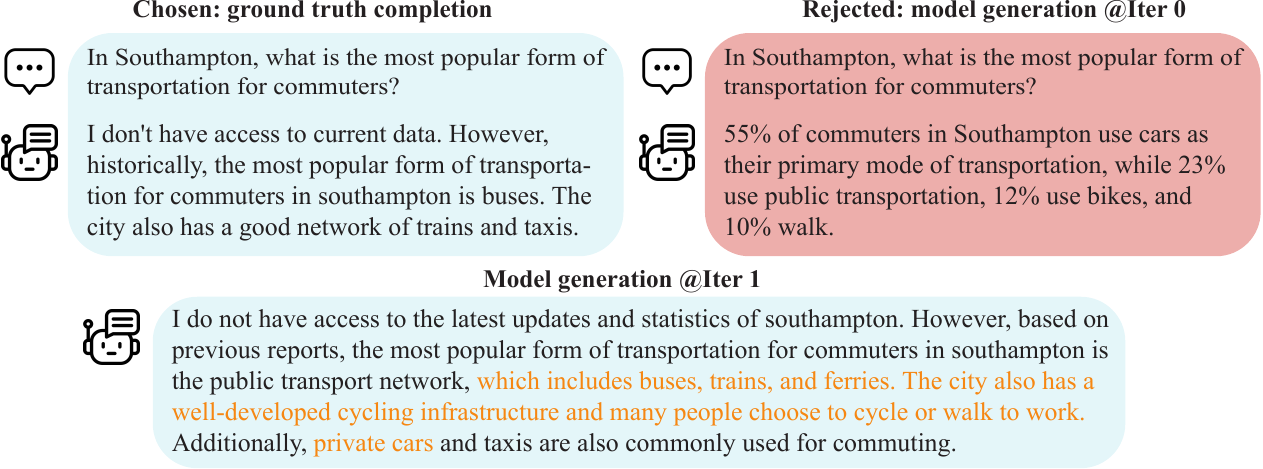}
    \caption{Example of ground truth completion compared to the fine-tuned model generation at iteration 0 and 1. We can observe that the model generation at iteration 0, although fluent,  incorrectly quantifies transportation preferences with specific percentages that are potentially hallucinations. The model generation at iteration 1 provides a qualitative summary of the transportation forms at Southampton without specific percentage, aligning more closely with the ground truth while adding more details.}
    \label{fig:observation}
\end{figure*}

\subsection{Self-Play Fine-Tuning ($\method$)}\label{sec:optim}

Let us consider a two-player game, where the main player's objective is to distinguish the responses generated by the LLM and those generated by the human. Meanwhile, the opponent's role is to generate responses that are indistinguishable from the human's responses. The core of our method is the self-play mechanism, where both the main player and the opponent are the same LLM, but from different iterations. More specifically, the opponent is the old LLM from the previous iteration, and the main player is the new LLM to be learned in the current iteration.

In iteration $t+1$, the opponent is the LLM from the previous iteration, denoted by $p_{\btheta_t}$, which generates responses $\yb'$ for those prompts $\xb$ in the SFT dataset according to $p_{\btheta_t}(\cdot|\xb)$. Our method, therefore, consists of the following two steps at iteration $t+1$: $(1)$ training the main player, and $(2)$ updating the opponent player.

\noindent \textbf{Training the Main Player.}
We begin with illustrating how we expect a main player is trained to distinguish LLM responses from human responses.  Motivated by integral probability metric (IPM) \citep{muller1997integral},
we formulate our objective function such that the main player $f_{t+1}$ maximizes the expected value gap between the target data distribution $p_{\mathrm{data}}$ and the opponent player's distribution $p_{\btheta_t}$: 
\begin{align}
f_{t+1} &= \argmax_{f \in \cF_{t}}\EE\big[ f(\xb, \yb) - f(\xb, \yb') 
\big], \label{eq:f*1}
\end{align}
where the expectation is computed over the distributions $\xb\sim q(\cdot), \yb\sim p_{\mathrm{data}}(\cdot|\xb), \yb'\sim p_{\btheta_t}(\cdot|\xb)$, and $\cF_{t}$ is a sequence of highly expressive function classes that we will determine in later deduction. The subscript $t$ in $\cF_{t}$ is due to that the function class is dependent on $p_{\btheta_t}$. Given such a $f_{t+1}$ and a response sequence $\yb$ to the prompt $\xb$, the value of $f_{t+1}(\xb, \yb)$ reflects the main player's degree of belief that $\yb$ originates from $p_{\mathrm{data}}$ rather than $p_{\btheta_t}$.  
Ideally, the main player $f_{t+1}$ should yield a high value when $\yb \sim p_{\mathrm{data}}(\cdot|\xb)$ and a low value when $\yb' \sim p_{\btheta_t}( \cdot | \xb)$, where $p_{\btheta_t}$ is the opponent's distribution. Instead of solving \eqref{eq:f*1}, we can also solve the following more general optimization problem, 
\begin{equation}
f_{t+1} = \argmin_{f \in \cF_{t}}\EE\big[ \ell\big(f(\xb, \yb) - f(\xb, \yb')\big) 
\big], 
\label{eq:f-star}
\end{equation}
where the expectation is computed over the distribution $\xb\sim q(\cdot), \yb\sim p_{\mathrm{data}}(\cdot|\xb), y'\sim p_{\btheta_t}(\cdot|\xb)$, and $\ell(\cdot)$ is a loss function that is both monotonically decreasing and convex. For example, a linear loss function $\ell(t) = - t$ reduces \eqref{eq:f-star} to the minimization version of \eqref{eq:f*1}. 
However, the use of a linear loss function results in an unbounded objective value, which, during continuous training, leads to a negative infinite value of $f(\xb, \yb')$ on the opponent player's responses. 
Therefore, in our work, we choose the logistic loss function $\ell(t):= \log(1+\exp(-t))$ for its non-negativity, smoothness, and exponentially decaying tail as $t \rightarrow \infty$.
Such a choice of loss function aids in preventing the excessive growth in the absolute value of $f$.

\noindent \textbf{Updating the Opponent Player.} 
Previously we have discussed the training of $f_{t+1}$ given the opponent player's distribution $p_{\btheta_t}$. Now suppose we have optimized our main player $f_{t+1}$ that can distinguish $p_{\mathrm{data}}$ from $p_{\btheta_{t}}$, within a certain function class $\cF_t$, we elaborate how we get parameter $\btheta_{t+1}$ of the opponent player.
Specifically, when presented with two responses $\yb$ and $\yb'$ to the same prompt $\xb$, $f_{t+1}$ assesses the values $f_{t+1}(\xb, \yb)$ and $f_{t+1}(\xb, \yb')$.
It then infers that the response with the higher value is from the real data distribution $p_{\mathrm{data}}$ and the response with lower value is attributed to the LLM $p_{\btheta_{t}}$.  
Subsequently, the objective of the opponent player is to find a better LLM that generates responses indistinguishable from $p_{\mathrm{data}}$ for the main player. 
This is achieved by maximizing the expected value $\EE_{\xb \sim q(\cdot), \yb\sim p(\cdot|\xb)} [f_{t+1}(\xb, \yb)]$.
In addition, to prevent excessive deviation of $p_{\btheta_{t+1}}$ from $p_{\btheta_{t}}$ and stabilize the self-play, we incorporate a Kullback-Leibler (KL) regularization term. Putting these together gives rise to the following optimization problem:
\begin{align}
&\argmax_ {p}\EE_{\xb \sim q(\cdot), \yb\sim p(\cdot|\xb)} [f_{t+1}(\xb, \yb)]\notag\\
&\qquad- \lambda \EE_{\xb\sim q(\cdot)}\mathrm{KL}\big(p(\cdot|\xb)||p_{\btheta_t}(\cdot|\xb)\big), \label{eq: update}
\end{align} 
where $\lambda>0$ is the regularization parameter. 
Notably, \eqref{eq: update} has a closed-form solution $\hat{p}(\cdot|\xb)$: 
\begin{align}
\hat{p}(\yb|\xb) \propto p_{\btheta_t}(\yb|\xb) \exp\big(\lambda^{-1}f_{t+1}(\xb, \yb)\big). \label{eq:closed form solution}  
\end{align}
It is worth noting that $\hat{p}(\cdot|\xb)$ is not guaranteed to be belong to the LLM space $\{p_{\btheta}(\cdot|\xb)|\btheta \in \bTheta\}$. 
Since we hope that the closed-form solution $\hat{p}$ in the probability space can be realized by an LLM with parameter $\btheta$, i.e., $p_{\btheta}(\yb|\xb) = \hat{p}(\yb|\xb)$,
solving for $p_{\btheta}(\yb|\xb) \propto p_{\btheta_t}(\yb|\xb) \exp\big(\lambda^{-1}f_{t+1}(\xb, \yb)\big)$ gives $f_{t+1}(\xb, \yb) = \lambda\cdot \log \frac{p_{\btheta}(\cdot | \xb)}{p_{\mathrm{\btheta_t}}(\cdot | \xb)}$. 
This suggests the following function class $\cF_t$ for $f_{t+1}$:  
\begin{align}
\cF_{t} = \bigg\{\lambda\cdot \log \frac{p_{\btheta}(\yb | \xb)}{p_{\mathrm{\btheta_t}}(\yb | \xb)}\bigg|\btheta \in \bTheta\bigg\},   \label{eq:function class0} 
\end{align}
where $\bTheta$ is the parameter space of LLMs being considered. 
Given the choice of $\cF_t$ in \eqref{eq:function class0}, optimizing~\eqref{eq:f-star} gives $f_{t+1}$ parameterized by $\btheta_{t+1}$ in the following form:
\begin{align}
f_{t+1}(\xb, \yb) = \lambda\cdot \log \frac{p_{\btheta_{t+1}}(\yb | \xb)}{p_{\mathrm{\btheta_{t}}}(\yb | \xb)}.    \label{eq:t+1}
\end{align}
Substituting \eqref{eq:t+1} into \eqref{eq:closed form solution} yields $\hat{p}(\yb|\xb) = p_{\btheta_{t+1}}(\yb|\xb)$. 
In other words, $\btheta_{t+1}$ learned from \eqref{eq:f-star} is exactly the LLM parameter for our ideal opponent selection.

\noindent \textbf{End-to-end Training Objective.} 
We integrate the previously discussed two steps into a single end-to-end training objective with an update rule of $\btheta_{t+1}$. Specifically, plugging~\eqref{eq:function class0} into~\eqref{eq:f-star} arrives at the update rule $\btheta_{t+1} = \argmin_{\btheta \in \bTheta}L_{\method}(\btheta, \btheta_t)$, where $L_{\method}$ is the training objective defined as follows
\begin{align}
L_{\method}= \EE\bigg[\ell\bigg(\lambda \log \frac{p_{\btheta}(\yb | \xb)}{p_{\btheta_t}(\yb | \xb)}-\lambda \log \frac{p_{\btheta}(\yb' | \xb)}{p_{\btheta_t}(\yb' | \xb)}\bigg)\bigg], \label{eq:loss}  
\end{align}
where the expectation is computed over the distribution $\xb \sim q(\cdot), \yb \sim p_{\mathrm{data}}(\cdot|\xb), \yb' \sim p_{\btheta_t}(\cdot|\xb)$.
We summarize the iterative self-play process of our method $\method$ as follows, 
\begin{align*}
\ldots &\quad  \rightarrow  \underbrace{p_{\btheta_{t}}(\cdot | \xb)}_{\text{Opponent Player at $t$}} \rightarrow  \quad \underbrace{\lambda\cdot \log \frac{p_{\btheta_{t+1}}(\cdot | \xb)}{p_{\mathrm{\btheta_{t}}}(\cdot | \xb)}}_{\text{Main Player at $t+1$}} \\
&\quad \rightarrow \underbrace{p_{\btheta_{t+1}}(\cdot | \xb)}_{\text{Opponent Player at $t+1$} } \rightarrow \quad \ldots 
\end{align*}
Namely, the opponent player chosen from the previous iteration $t$ is employed to train the main player at iteration $t+1$, resulting in the LLM parameterized by $\btheta_{t+1}$. Then we determine the next opponent player at iteration $t+1$ by directly copying the LLM parameter $\btheta_{t+1}$, which is then used in training the main player at iteration $t+2$.
The detailed algorithm is presented in Algorithm~\ref{alg:Improving}. 

\begin{algorithm}[ht!]
\caption{Self-Play Fine-Tuning ($\method$)}\label{alg:Improving}
\begin{algorithmic}
\STATE \textbf{Input:} $\{(\xb_i, \yb_i)\}_{i\in [N]}$: SFT Dataset, $p_{\btheta_0}$: LLM with parameter $\btheta_0$, $T$: Number of iterations. 
\FOR{$t= 0,\ldots, T-1$}
\FOR{$i = 1, \ldots N$}
\STATE Generate synthetic data $\yb_{i}' \sim p_{\btheta_{t}}(\cdot|\xb_i)$.
\ENDFOR 
\STATE Update $\btheta_{t+1} = \argmin_{\btheta \in \bTheta} \sum_{i\in [N]}\ell\Big(\lambda \log \frac{p_{\btheta}(\yb_i | \xb_i)}{p_{\btheta_{t}}(\yb_i | \xb_i)}-\lambda \log \frac{p_{\btheta}(\yb'_i | \xb_i)}{p_{\btheta_{t}}(\yb'_i | \xb_i)}\Big)$.
\ENDFOR
\STATE \textbf{Output:} $\btheta_T$.
\end{algorithmic}
\end{algorithm}

\subsection{Comparison between \method{} and DPO}
\CC{In Section~\ref{sec:optim}, we propose 
Self-Play Fine-Tuning ($\method$) with an end-to-end training objective \eqref{eq:loss} for each iteration. \eqref{eq:loss} bears resemblance to direct preference optimization (DPO) \citep{rafailov2023direct} for RL fine-tuning.} 
\CC{However, \method{} and DPO are fundamentally different. DPO is based on the Bradley-Terry (BT) model: $p(\yb_1 \succ \yb_2 | \xb) = \frac{\exp(r^*(\xb, \yb_1))}{\exp(r^*(\xb, \yb_1)) + \exp(r^*(\xb, \yb_2))}$, and maximizes the log-likelihood of $p(\yb_1 \succ \yb_2 | \xb)$ by direct policy optimization without explicit reward estimation. In contrast, \method{} relies on maximizing the IPM to compete with an increasingly stronger version of itself. More detailed comparisons are highlighted as follows:}


\begin{enumerate}[leftmargin=*,nosep]
\item DPO does not inherently lead to iterative training. More specifically, DPO aims to match the preference probability $p(\yb_1 \succ \yb_2 | \xb)$ induced from its reward model with the data distribution $p_{\mathrm{data}}(\yb_1 \succ \yb_2 |\xb)$ in a single iteration. On the contrary, \method{}’s self-play mechanism naturally leads to an iterative training procedure.  \method{} iteratively refines its generation distribution $p_{\btheta}(\yb|\xb)$ to match the target distribution $p_{\text{data}}(\yb|\xb)$ across iterations. 
\item  $\method$ only requires the SFT dataset, represented by pairs $(\xb, \yb)$.
In contrast, DPO necessitates a preference dataset, represented by $(\xb, \yb_w, \yb_l)$, where $\yb_w$ and $\yb_l$ denote the winner (chosen) and loser (rejected) responses, respectively. Moreover, \method{} can be applied between SFT and RL fine-tuning. 
\item  In \method{}, we can choose different loss functions $\ell$ which only need to be convex and decreasing (detailed later in Theorem~\ref{thm:stop}), which includes correlation loss, hinge loss, and logistic loss. Only when $\ell$ is chosen as the logistic loss would the training objective of \method{} become similar to that of DPO. 
\end{enumerate}

\CC{Recently,~\citet{xu2023some} proposed to use iterative preference optimization with the Pairwise Cringe Loss (PCO), and generalized DPO to iterative DPO. Concurrent to our work, \citet{yuan2024self} further proposed a framework named ``self-rewarding language models'', which leverages the LLM itself as the reward model to provide the preference feedback, and employs iterative DPO to train the LLM.}
\CC{
Compared with \citet{xu2023some, yuan2024self}, \method{}'s self-assessment is implicit, as no intermediate reward or preference feedback is required. 
}



\section{Theoretical Analysis}\label{sec:thm}
In this section, we provide a theoretical analysis for Algorithm~\ref{alg:Improving} in Section~\ref{sec: main}.
Under monotonicity and convexity assumption of the objective function $\ell$, we show that the global optimum is obtained if and only if parameter $\btheta_t$ generates data distribution.
We summarize our assumptions as follows:

\begin{assumption} \label{assm:1}
The loss function $\ell(t): \RR \rightarrow \RR$ is monotonically decreasing, i.e., $\forall t, \ell'(t) \leq 0$ and satisfies $\ell'(0) < 0$. In addition, $\ell(t)$ is a convex function.
\end{assumption}

Assumption~\ref{assm:1} holds for a wide range of loss functions commonly used in machine learning, including correlation loss $\ell(t) = 1 - t$, hinge loss $\ell(t) = \max(0,1 - t)$, exponential loss $\ell(t) = \exp(-t)$ and logistic loss $\ell(t) = \log(1 + \exp(-t))$. Under Assumptions \ref{assm:1}, we present the following theorem, which is pivotal in understanding the optimization dynamics of our method.

\begin{theorem}\label{thm:stop}
Under Assumption~\ref{assm:1}, suppose there exists $p_{\btheta}(\cdot|\xb) = p_{\mathrm{data}}(\cdot|\xb)$, then we have that  
\begin{itemize}[leftmargin=*,nosep]
\item (Sufficiency) If $p_{\btheta_t}(\cdot|\xb) = p_{\mathrm{data}}(\cdot|\xb)$, then $\btheta_t$ is the global minimum of $\eqref{eq:loss}$ for any $\lambda\geq 0$. 
\item (Necessity) If $p_{\btheta_t}(\cdot|\xb)  \not= p_{\mathrm{data}}(\cdot|\xb)$, there exists an appropriately chosen $\lambda$, such that $\btheta_{t}$ is not the global minimum of \eqref{eq:loss}.
\end{itemize}
\end{theorem}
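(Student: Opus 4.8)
The plan is to analyze the objective $L_{\method}(\btheta,\btheta_t)$ by introducing the reparametrization $g_\btheta(\xb,\yb) := \lambda\log\frac{p_{\btheta}(\yb|\xb)}{p_{\btheta_t}(\yb|\xb)}$, so that $L_{\method}(\btheta,\btheta_t) = \EE[\ell(g_\btheta(\xb,\yb) - g_\btheta(\xb,\yb'))]$ where $\yb\sim p_{\mathrm{data}}(\cdot|\xb)$ and $\yb'\sim p_{\btheta_t}(\cdot|\xb)$. Throughout, I will use that by hypothesis there exists a parameter realizing $p_{\mathrm{data}}$ exactly, so the set of achievable functions $g_\btheta$ is rich enough to include $\lambda\log\frac{p_{\mathrm{data}}(\yb|\xb)}{p_{\btheta_t}(\yb|\xb)}$.

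\textbf{Sufficiency.} Assume $p_{\btheta_t}(\cdot|\xb) = p_{\mathrm{data}}(\cdot|\xb)$. First evaluate the objective at $\btheta = \btheta_t$: then $g_{\btheta_t}\equiv 0$, so the argument of $\ell$ is always $0$ and $L_{\method}(\btheta_t,\btheta_t) = \ell(0)$. Next I lower bound $L_{\method}(\btheta,\btheta_t)$ for arbitrary $\btheta$. By convexity of $\ell$ and Jensen's inequality, $L_{\method}(\btheta,\btheta_t) \ge \ell\big(\EE[g_\btheta(\xb,\yb) - g_\btheta(\xb,\yb')]\big)$. Since $p_{\btheta_t} = p_{\mathrm{data}}$, the two responses $\yb,\yb'$ are drawn from the same conditional distribution given $\xb$, hence $\EE[g_\btheta(\xb,\yb)] = \EE[g_\btheta(\xb,\yb')]$ and the inner expectation vanishes. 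Combined with $\ell$ being decreasing this gives $L_{\method}(\btheta,\btheta_t) \ge \ell(0) = L_{\method}(\btheta_t,\btheta_t)$, so $\btheta_t$ is a global minimum for every $\lambda \ge 0$.

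\textbf{Necessity.} Now assume $p_{\btheta_t}(\cdot|\xb) \neq p_{\mathrm{data}}(\cdot|\xb)$ on a set of positive measure. The goal is to exhibit, for a suitable $\lambda$, a parameter $\btheta$ with $L_{\method}(\btheta,\btheta_t) < L_{\method}(\btheta_t,\btheta_t) = \ell(0)$. The natural candidate is the parameter $\btheta^\star$ with $p_{\btheta^\star} = p_{\mathrm{data}}$ (which exists by hypothesis); then $g_{\btheta^\star}(\xb,\yb) = \lambda\log\frac{p_{\mathrm{data}}(\yb|\xb)}{p_{\btheta_t}(\yb|\xb)}$. I would consider the scalar function $\varphi(\lambda) := L_{\method}(\btheta^\star,\btheta_t) = \EE\big[\ell\big(\lambda(\log\tfrac{p_{\mathrm{data}}(\yb|\xb)}{p_{\btheta_t}(\yb|\xb)} - \log\tfrac{p_{\mathrm{data}}(\yb'|\xb)}{p_{\btheta_t}(\yb'|\xb)})\big)\big]$, which satisfies $\varphi(0) = \ell(0)$, and differentiate at $\lambda = 0^+$. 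Writing $\Delta := \log\frac{p_{\mathrm{data}}(\yb|\xb)}{p_{\btheta_t}(\yb|\xb)} - \log\frac{p_{\mathrm{data}}(\yb'|\xb)}{p_{\btheta_t}(\yb'|\xb)}$, we get $\varphi'(0) = \ell'(0)\,\EE[\Delta]$. Since $\yb\sim p_{\mathrm{data}}$ and $\yb'\sim p_{\btheta_t}$ independently given $\xb$, $\EE[\Delta\,|\,\xb] = \mathrm{KL}(p_{\mathrm{data}}(\cdot|\xb)\|p_{\btheta_t}(\cdot|\xb)) + \mathrm{KL}(p_{\btheta_t}(\cdot|\xb)\|p_{\mathrm{data}}(\cdot|\xb)) > 0$ because the two distributions differ on a positive-measure set of prompts (using nonnegativity of KL and strict positivity when the arguments differ). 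Hence $\EE[\Delta] > 0$, and since $\ell'(0) < 0$ by Assumption~\ref{assm:1}, we get $\varphi'(0) < 0$. Therefore for all sufficiently small $\lambda > 0$, $\varphi(\lambda) < \varphi(0) = \ell(0) = L_{\method}(\btheta_t,\btheta_t)$, which shows $\btheta_t$ is not a global minimum of \eqref{eq:loss} for such $\lambda$.

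\textbf{Main obstacle.} The delicate point is the differentiation-under-the-expectation step in the necessity part: I need to justify exchanging $\frac{d}{d\lambda}$ and $\EE$ near $\lambda = 0$, which requires an integrability/dominated-convergence argument on $\ell'$ composed with $\lambda\Delta$ (and control of the tails of $\Delta$, i.e., of the log-density-ratios). For the loss functions of interest $\ell'$ is bounded, which makes this routine, but stating the minimal regularity hypothesis cleanly — or alternatively giving a direct first-order Taylor estimate with an explicit remainder bound using convexity — is the part that needs care. A secondary subtlety is ensuring the candidate $g_{\btheta^\star}$ genuinely lies in the admissible class, which is exactly what the standing hypothesis "there exists $p_{\btheta}(\cdot|\xb) = p_{\mathrm{data}}(\cdot|\xb)$" supplies.
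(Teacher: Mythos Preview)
Your proposal is correct and follows essentially the same approach as the paper: for sufficiency you use convexity/Jensen plus the fact that $\yb$ and $\yb'$ are identically distributed (the paper does a symmetrization $\ell(D)+\ell(-D)\ge 2\ell(0)$ instead of your direct $\EE[\ell(D)]\ge\ell(\EE[D])=\ell(0)$, but both rest on the same idea), and for necessity your function $\varphi(\lambda)$ and its derivative at $0$ match the paper's $g(\lambda)$ argument verbatim, including the identification $\varphi'(0)=\ell'(0)\big(\mathrm{KL}(p_{\mathrm{data}}\|p_{\btheta_t})+\mathrm{KL}(p_{\btheta_t}\|p_{\mathrm{data}})\big)$. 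The integrability caveat you flag is not addressed in the paper either, so you are not missing anything the paper supplies.
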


\begin{remark}Theorem \ref{thm:stop} suggests that under certain conditions, the optimization process of our method naturally stops at the point $p_{\btheta}(\cdot|\xb) = p_{\mathrm{data}}(\cdot|\xb)$, implying the effectiveness of our approach in aligning the LLM's distribution with the target data distribution. Moreover, Theorem \ref{thm:stop} also indicates that the optimization process only stops when the global optimality is achieved, i.e., the LLM's distribution aligns with the target data distribution.
\end{remark}
For the logistic loss function $\ell(t) = \log(1+\exp(-t))$, the following theorem gives a more precise characterization of the opponent player, enabling a better understanding of $\method$.
\begin{theorem}\label{thm:main2} Consider the choice of logistic loss $\ell(t) = \log(1+\exp(-t))$ in $\method$. Suppose that $p_{\btheta_{t}}(\yb|\xb)\big(p_{\mathrm{data}}(\yb|\xb)/p_{\btheta_{t}}(\yb|\xb) \big)^{1/\lambda}$ lies in the LLM space $\{p_{\btheta}(\yb|\xb)|\btheta \in \bTheta\}$ and $\btheta_{t+1}$ is global minimum of $L_{\method}(\btheta, \btheta_t)$, then the opponent player at iteration $t+1$ satisfies
\begin{align*}
p_{\btheta_{t+1}}(\yb|\xb) \propto p_{\btheta_{t}}(\yb|\xb)\big(p_{\mathrm{data}}(\yb|\xb)/p_{\btheta_{t}}(\yb|\xb) \big)^{1/\lambda}.  
\end{align*}
\end{theorem}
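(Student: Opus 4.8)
The plan is to compute the functional derivative of $L_{\method}(\btheta,\btheta_t)$ with respect to the probability distribution $p_{\btheta}(\cdot|\xb)$, treating $p := p_{\btheta}(\cdot|\xb)$ as a free element of the probability simplex rather than a parametrized LLM, and then show that the stationarity condition forces the stated form. Concretely, I would first substitute the logistic loss $\ell(t)=\log(1+\exp(-t))$ into \eqref{eq:loss} and write $L_{\method}$ as an expectation over $\xb\sim q$, $\yb\sim p_{\mathrm{data}}(\cdot|\xb)$, $\yb'\sim p_{\btheta_t}(\cdot|\xb)$ of
\[
\log\!\Big(1+\frac{p(\yb'|\xb)^{\lambda}\,p_{\btheta_t}(\yb|\xb)^{\lambda}}{p(\yb|\xb)^{\lambda}\,p_{\btheta_t}(\yb'|\xb)^{\lambda}}\Big).
\]
Because the paper's hypothesis guarantees that the target distribution $p_{\btheta_t}(\yb|\xb)\big(p_{\mathrm{data}}(\yb|\xb)/p_{\btheta_t}(\yb|\xb)\big)^{1/\lambda}$ actually lies in the LLM space, minimizing over $\btheta\in\bTheta$ coincides with minimizing over the larger space of all conditional distributions $p(\cdot|\xb)$; so it suffices to solve the unconstrained (up to normalization) variational problem and check the minimizer is realizable — which it is, by assumption.

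The key steps, in order: (i) reduce to the per-prompt problem by noting the objective decomposes as $\EE_{\xb\sim q}[\,\cdot\,]$ and the inner minimization can be done pointwise in $\xb$; (ii) introduce a Lagrange multiplier for the constraint $\sum_{\yb} p(\yb|\xb)=1$ and take the derivative of the inner expectation with respect to $p(\yb|\xb)$ for a fixed response $\yb$; (iii) observe that $\yb$ appears in the integrand both in the ``positive'' slot (when it is the $p_{\mathrm{data}}$-sample) and the ``negative'' slot (when it is the $p_{\btheta_t}$-sample), yielding a stationarity equation of the form
\[
-\lambda\,\frac{p_{\mathrm{data}}(\yb|\xb)}{p(\yb|\xb)}\,\EE_{\yb'\sim p_{\btheta_t}}[\sigma(\cdots)] \;+\; \lambda\,\frac{p_{\btheta_t}(\yb|\xb)}{p(\yb|\xb)}\,\EE_{\yb\,\text{in pos slot}}[\sigma(\cdots)] \;=\; \mu(\xb),
\]
where $\sigma$ is the logistic sigmoid; (iv) verify by direct substitution that $p(\yb|\xb)\propto p_{\btheta_t}(\yb|\xb)\big(p_{\mathrm{data}}(\yb|\xb)/p_{\btheta_t}(\yb|\xb)\big)^{1/\lambda}$ satisfies this equation, using the fact that with this choice the argument of every sigmoid becomes a constant (independent of $\yb,\yb'$), so the two expectations factor out and the $p_{\mathrm{data}}$ and $p_{\btheta_t}$ terms combine cleanly; (v) invoke convexity of $\ell$ (Assumption~\ref{assm:1}) to argue the stationary point is the global minimum, and use the realizability hypothesis to conclude $\btheta_{t+1}$ attains it, hence $p_{\btheta_{t+1}}$ has the claimed proportional form.

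An alternative, cleaner route I would actually prefer: rewrite \eqref{eq:loss} using the substitution $f(\xb,\yb)=\lambda\log\frac{p_{\btheta}(\yb|\xb)}{p_{\btheta_t}(\yb|\xb)}$ and appeal to the closed-form machinery already developed around \eqref{eq:closed form solution}–\eqref{eq:t+1}. That is, the minimizer of the IPM-with-logistic-loss objective over the function class $\cF_t$ corresponds, via the exponential tilt in \eqref{eq:closed form solution}, to the update $\hat p(\yb|\xb)\propto p_{\btheta_t}(\yb|\xb)\exp(\lambda^{-1}f_{t+1}(\xb,\yb))$; one then shows the optimal discriminator for the logistic objective is $f_{t+1}(\xb,\yb) = \log\frac{p_{\mathrm{data}}(\yb|\xb)}{p_{\btheta_t}(\yb|\xb)}$ (the standard Jensen-Shannon / cross-entropy optimal-discriminator identity, as in GANs), and substituting gives exactly $p_{\btheta_{t+1}}(\yb|\xb)\propto p_{\btheta_t}(\yb|\xb)\big(p_{\mathrm{data}}(\yb|\xb)/p_{\btheta_t}(\yb|\xb)\big)^{1/\lambda}$. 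The main obstacle is step (iv)/(the optimal-discriminator identity): one must be careful that the objective in \eqref{eq:loss} is a \emph{paired} expectation (the same $\xb$ appears in both $f(\xb,\yb)$ and $f(\xb,\yb')$), not a difference of two independent marginals, so the pointwise optimization needs the observation that conditioned on $\xb$ the samples $\yb$ and $\yb'$ are independent, which lets the cross term factor — and that the hypothesized realizability in the LLM space is exactly what licenses passing from the unconstrained functional minimizer back to a genuine $\btheta_{t+1}$.
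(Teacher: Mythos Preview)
Your alternative route is essentially the paper's approach: reduce to finding the optimal discriminator $f$ for the paired logistic objective, show it equals $\log\frac{p_{\mathrm{data}}(\yb|\xb)}{p_{\btheta_t}(\yb|\xb)}$ up to an additive function of $\xb$, and then use realizability to pull this back to a $\btheta_{t+1}$ in the LLM space. Where you leave a gap (and you correctly flag it as ``the main obstacle''), the paper supplies a clean device: symmetrize the expectation by swapping the roles of $\yb$ and $\yb'$, so that for each fixed triple $(\xb,\yb,\yb')$ the integrand becomes $a\,\ell(t)+b\,\ell(-t)$ with $a=q(\xb)p_{\mathrm{data}}(\yb|\xb)p_{\btheta_t}(\yb'|\xb)$, $b=q(\xb)p_{\mathrm{data}}(\yb'|\xb)p_{\btheta_t}(\yb|\xb)$ and $t=f(\xb,\yb)-f(\xb,\yb')$. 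A one-line calculus lemma shows this is minimized exactly at $t=\log(a/b)$, which forces $f(\xb,\yb)-\log\frac{p_{\mathrm{data}}(\yb|\xb)}{p_{\btheta_t}(\yb|\xb)}$ to be a function of $\xb$ alone. This is the paired-objective analogue of the GAN optimal-discriminator identity; it is \emph{not} the standard binary-cross-entropy result, so invoking ``the standard Jensen--Shannon identity'' without this symmetrization step would not suffice. Note also the additive freedom $Z(\xb)$: the minimizer over all $f$ is only determined up to a function of $\xb$, and this is what becomes the normalization constant in the final proportionality.

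Your first approach (Lagrange multipliers in $p$) has a concrete error in step (iv). Substituting the claimed optimum into $\lambda\log\frac{p(\yb|\xb)}{p_{\btheta_t}(\yb|\xb)}-\lambda\log\frac{p(\yb'|\xb)}{p_{\btheta_t}(\yb'|\xb)}$ yields $\log\frac{p_{\mathrm{data}}(\yb|\xb)p_{\btheta_t}(\yb'|\xb)}{p_{\btheta_t}(\yb|\xb)p_{\mathrm{data}}(\yb'|\xb)}$, which is \emph{not} constant in $(\yb,\yb')$; the sigmoid terms in your stationarity equation do not collapse, and the expectations $A(\zb)$ and $B(\zb)$ retain nontrivial dependence on $\zb$. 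So ``verify by direct substitution'' does not go through as stated. The variational route can be salvaged, but only by first passing to the $f$-parametrization and using the symmetrization argument above --- at which point it coincides with the paper's proof.
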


\begin{remark}
According to Theorem~\ref{thm:main2}, the model update from $p_{\btheta_t}(\yb|\xb)$ to $p_{\btheta_{t+1}}(\yb|\xb)$ tends to increase the probability $p_{\btheta_{t+1}}(\yb|\xb)$ when $p_{\btheta_t}(\yb|\xb)$ is less than $p_{\mathrm{data}}(\yb|\xb)$, and decrease it when $p_{\btheta_t}(\yb|\xb)$ is greater than $p_{\mathrm{data}}(\yb|\xb)$. Thus, Theorem~\ref{thm:main2} further confirms that our method's optimization process naturally converges to the point where $p_{\btheta}(\cdot|\xb)$ equals $p_{\mathrm{data}}(\cdot|\xb)$. The update of the opponent player is controlled by $\big(p_{\mathrm{data}}(\yb|\xb)/p_{\btheta_{t}}(\yb|\xb) \big)^{1/\lambda}$, which is regulated by the factor $1/\lambda$. A smaller $\lambda$ results in a larger change of the opponent player, while a larger $\lambda$ leads to a smaller change. Therefore, as $p_{\btheta}(\cdot|\xb)$ approaches $p_{\mathrm{data}}(\cdot|\xb)$, increasing $\lambda$ enhances the stability of LLM training. This observation aligns with \eqref{eq: update}, where $\lambda$ is the regularization parameter of the KL regularization that is employed to control the deviation of the opponent player.
\end{remark}

\section{Experiments}\label{sec:regularization} 

This section provides a detailed empirical analysis of $\method$. Our findings highlight several key points: $(1)$ $\method$ markedly enhances model performance across a wide range of evaluation benchmarks by breaking the limit of SFT; $(2)$ even without introducing new human annotated data, $\method$ at iteration $0$ achieves performance on par to DPO training that utilizes even more data; $(3)$ iterative training is a necessary component in $\method$ as it breaks the limit of multi-epoch training.

\subsection{Experiment Setup}\label{sec:exp}
\noindent \textbf{Model and Datasets.}
In this study, we adopt \texttt{zephyr-7b-sft-full} as our base model.  
This model derives from the pre-trained Mistral-7B \citep{jiang2023mistral} and has been further fine-tuned on the SFT dataset Ultrachat200k\footnote{\url{https://huggingface.co/datasets/HuggingFaceH4/ultrachat_200k}} by HuggingFace.
Ultrachat200k represents a high-quality 200k subset of the larger UltraChat \citep{ding2023enhancing} corpus, which comprises approximately 1.4M dialogues produced using OpenAI's Turbo APIs.
From UltraChat200k, We randomly sample $50$k prompts and use the base model to generate the synthetic responses. 
We subsequently follow the optimization method described in Section \ref{sec:optim} for further training. 
In multiple iterations, we leverage the synthetic data from the most recent iteration and add to the newly generated synthetic data, therefore resulting in a synthetic dataset size of $50$k at iteration $0$ and $100$k at iteration $1$, $2$ and $3$. At each iteration, we train our model for $2$ epochs.

\noindent \textbf{Evaluation.}
We employed the widely used Huggingface Open LLM Leaderboard \citep{open-llm-leaderboard} as our evaluation benchmark, using the same Language Model Evaluation Harness library~\citep{eval-harness}. 
This leaderboard encompasses 6 different datasets, each focusing on a a specific capability of LLMs.
Collectively, these datasets provide a thorough assessment framework, evaluating LLMs on commonsense reasoning (Arc \citep{clark2018think}, HellaSwag \citep{zellers2019hellaswag}, Winogrande \citep{sakaguchi2021winogrande}), multi-task language understanding (MMLU\citep{hendrycks2020measuring}), human falsehood mimic (TruthfulQA \citep{lin2021truthfulqa}) and math problem solving (GSM8k \citep{cobbe2021training}). 
We leave further implementation details to Appendix~\ref{app:exp} with detailed evaluation setting adopted by both the leaderboard and our experiments.

\subsection{$\method$ Effectively Improves Benchmark Performance}
\begin{figure}[ht]
    \centering
    \includegraphics[width=0.33\textwidth]{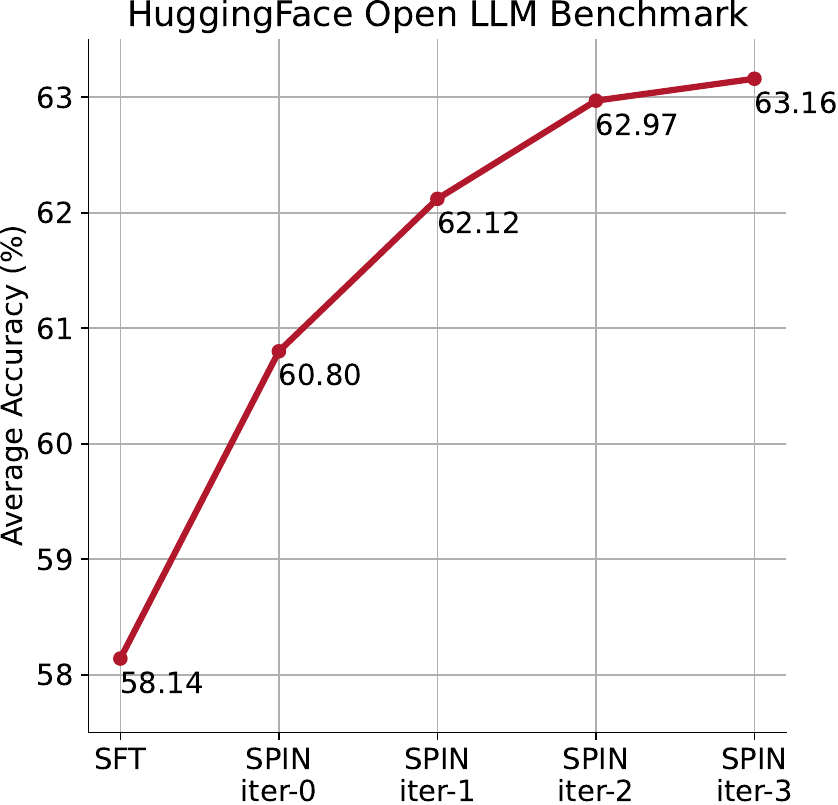}
    \caption{The average score of $\method$ at different iterations on the HuggingFace Open LLM leaderboard datasets. For ``SFT'', we report the performance of our base model \texttt{zephyr-7b-sft-full}, which has been fine-tuned on the same dataset we use to generate synthetic data.}
    \label{fig:average}
\end{figure}
In Figure~\ref{fig:average}, we demonstrate the effectiveness of $\method$ using HuggingFace Open LLM Leaderboard as the evaluation. 
In Figure~\ref{fig:dpo}, we compare the performance of our fine-tuned model by $\method$ after iterations 0 to 3 with the base model \texttt{zephyr-7b-sft-full} on each task included in the leaderboard. Detailed performances are presented in Table~\ref{tab:main} in Appendix~\ref{app:exp}. 
We can observe that $\method$ exhibits remarkable effectiveness in improving the model's performance by further leveraging the SFT dataset, on which the base model has already been fully fine-tuned. 
At iteration $0$, where model responses are generated from \texttt{zephyr-7b-sft-full}, we observe an overall improvement of $2.66\%$ on the average score. 
The improvement is particularly significant on the TruthfulQA and GSM8k benchmarks, with improvement  exceeding $5\%$ and $10\%$ respectively. 
At iteration $1$, we employ the LLM model from iteration $0$ to generate new responses for $\method$, adhering to the procedure outlined in Algorithm~\ref{alg:Improving}. 
This iteration yields further enhancements of $1.32\%$ on average, and especially significant on the Arc Challenge and TruthfulQA benchmarks.
Subsequent iterations continue this trend of incremental improvement across various tasks.
Meanwhile, the improvement at iteration $t+1$ is naturally smaller than that at iteration $t$.
As the iterative training progresses, the degree of improvement gradually approaches zero, suggesting that the model has reached a limiting point in the last iteration. 

\noindent \textbf{Comparison with DPO.} \texttt{zephyr-7b-beta} is a model derived from \texttt{zephyr-7b-sft-full}, trained with DPO on approximately $62$k preference data. 
This data, the UltraFeedback Binarized dataset~\citep{cui2023ultrafeedback}\footnote{\url{https://huggingface.co/datasets/HuggingFaceH4/ultrafeedback_binarized}}, comprises both chosen and rejected completions evaluated by GPT-4. 
We note that, DPO requires either human input or advanced language model feedback to determine the preference, making data generation a rather expensive procedure. 
In contrast, our $\method$ only requires the initial model itself. 
Moreover, unlike DPO which requires new data source, our method exclusively leverages the existing SFT dataset. 
In Figure~\ref{fig:dpo}, we show the performance comparison of $\method$ at iterations 0 and 1 (employing $50$k SFT data) with DPO training, from the same SFT checkpoint. 
We can observe that, while DPO leverages more data from new sources, $\method$ based on the existing SFT data can already achieve comparable average performance to DPO training at iteration 0. 
From iteration 1, $\method$ even surpasses the performance of DPO on the leaderboard benchmark.

\begin{figure*}[ht]
    \centering
    \includegraphics[width=0.85\textwidth]{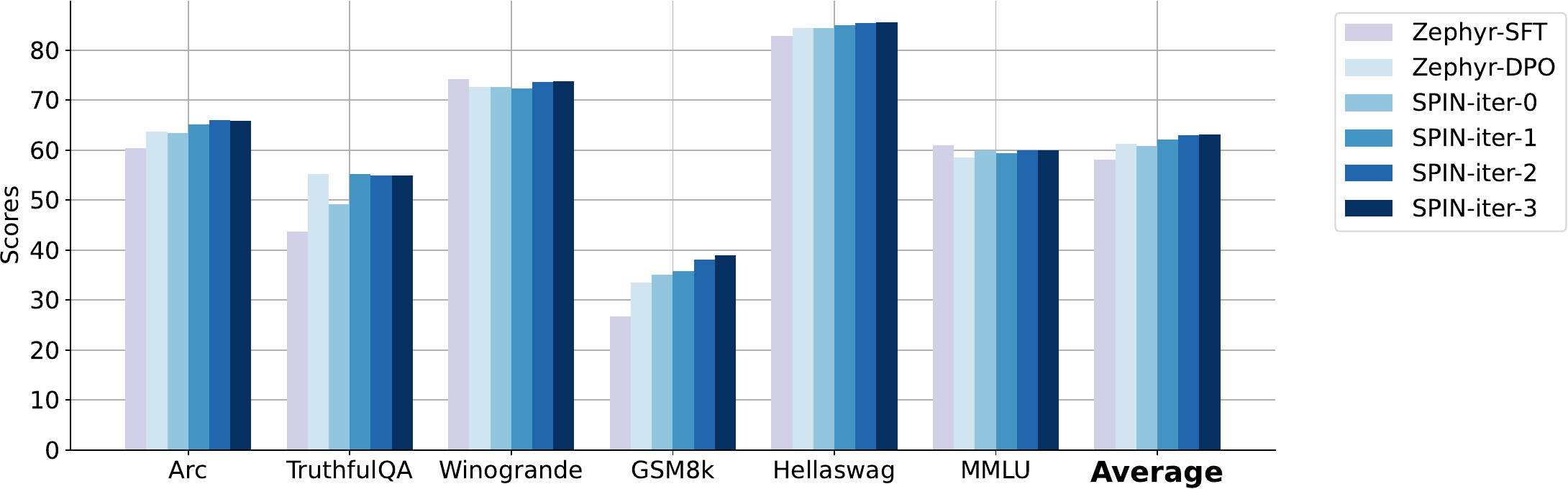}
    \caption{Performance comparison with DPO training across the six benchmark datasets. Self-play at iteration $0$ achieves comparable performance to DPO training with $62$k new data. At iteration $1$, self-play has already surpassed DPO training on the majority of datasets.}
    \label{fig:dpo}
\end{figure*}

\begin{figure*}[ht]
\centering   
\subfigure[Arc Challenge.]{\label{fig:wg_acc}\includegraphics[width=0.3\textwidth]{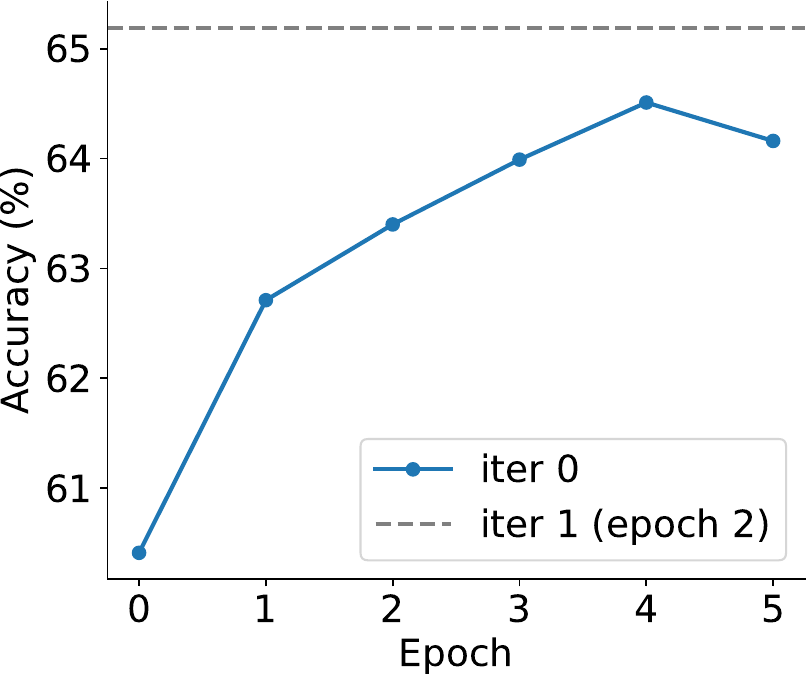}}
\subfigure[TruthfulQA.]{\label{fig:truth_acc}\includegraphics[width=0.3\textwidth]{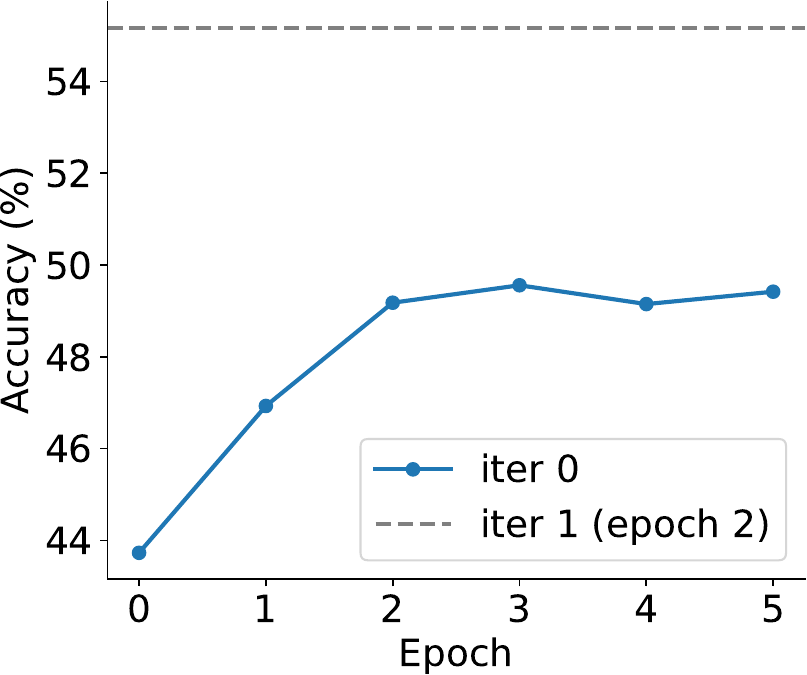}}
\subfigure[Average.]{\label{fig:avg_acc}\includegraphics[width=0.3\textwidth]{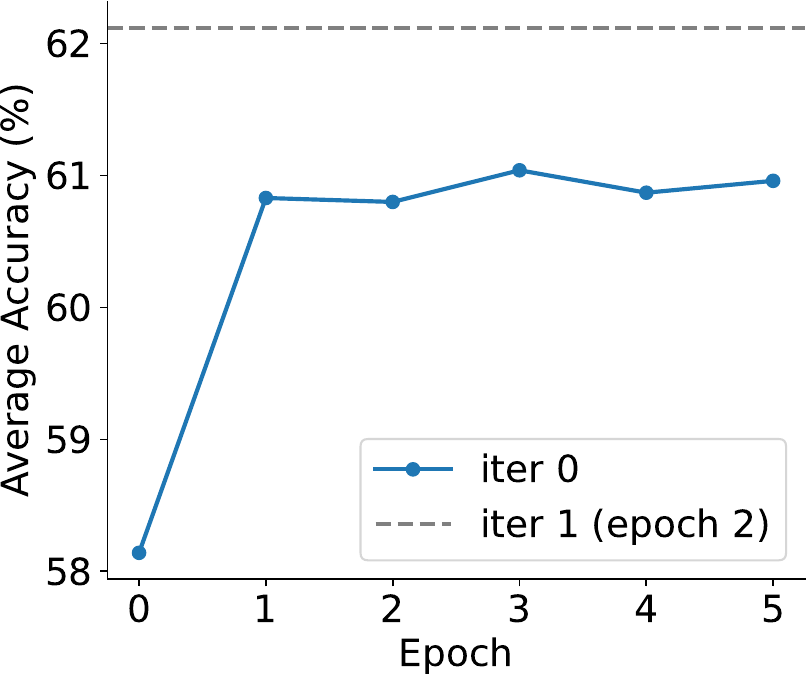}}
\caption{The $\method$ training dynamics of \texttt{zephyr-7b-sft-full} on the 50k synthetic data with regard to the number of training epochs during iteration 0. We can observe that iterative training is pivotal as training for more epochs during iteration 0 reaches a limit and cannot surpass iteration 1.}
\label{fig:ablation_epoch}
\end{figure*}

\subsection{Ablation Studies}
In this subsection, we examine the effect of synthetic dataset size and training epochs within an iteration. 
Our analysis demonstrates the effectiveness of the synthetic data used by $\method$ compared to the SFT data, as well as the necessity of iterative training in $\method$. 
In Appendix~\ref{app:exp}, we present assessment of $\method$ on additional benchmark tasks. 
\begin{figure}
    \centering
    \includegraphics[width=0.4\textwidth]{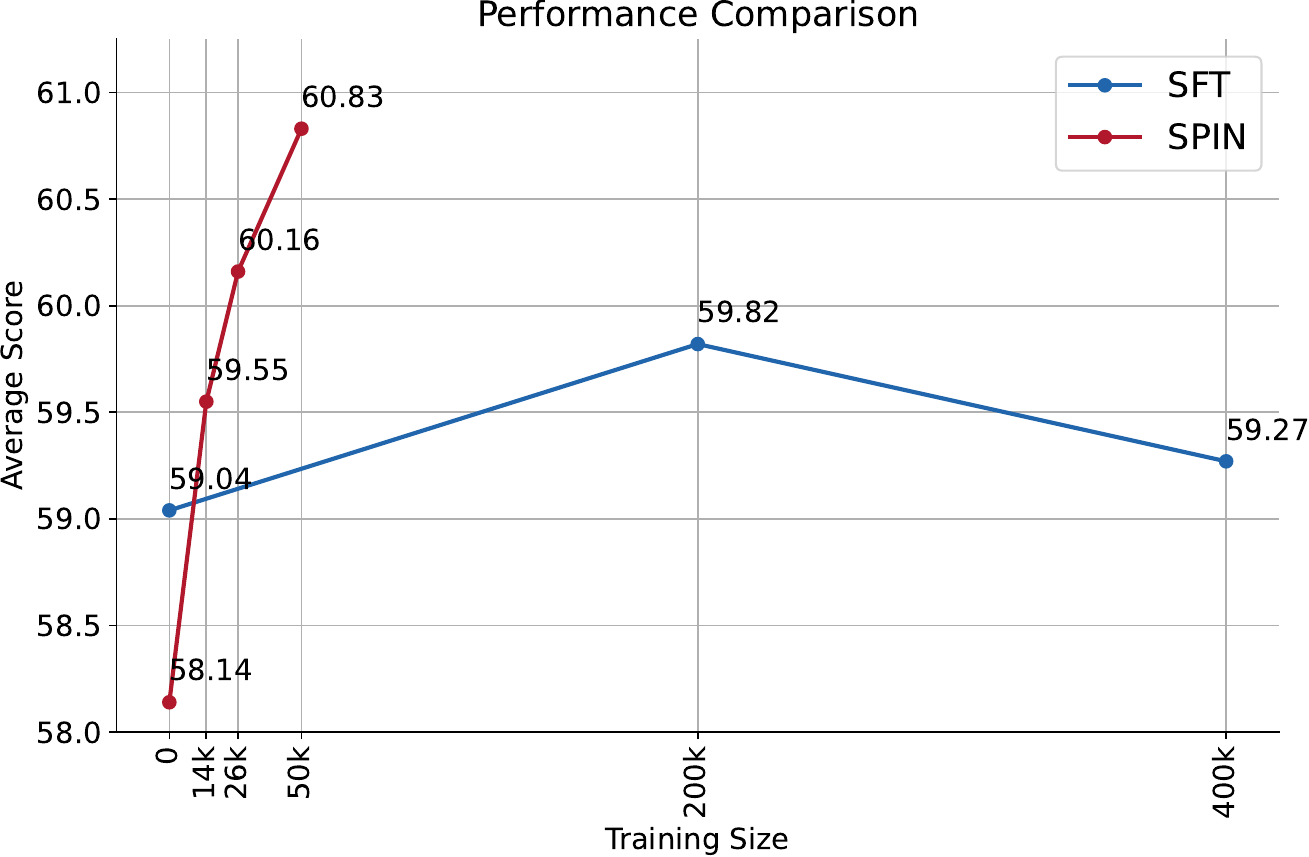}
    \caption{The scaling effect of training size of $\method$ compared to SFT on the average score of Open LLM Leaderboard. For $\method$, we consider training data of sizes $14$k, $26$k and $50$k where the larger dataset contains the smaller dataset. The starting point for $\method$ (with x-axis 0) is the \texttt{zephyr-7b-sft-full} checkpoint, which has been fine-tuned on Ultrachat200k for 1 epoch. We report the model performance trained for 1 epoch with $\method$ on the varying sizes of dataset. We additionally compare with SFT, where we fine-tune Mistral-7B on Ultrachat200k for 3 consecutive epochs and report the model performance at the first epoch as the starting point (with x-axis 0).}
\label{fig:training_size}
\end{figure}

\noindent \textbf{Training Size.}
We investigate the effect of varying training data size on the performance of $\method$. 
In Figure~\ref{fig:training_size}, we demonstrate the effect of training size for $\method$ during iteration $0$ and additionally compare with SFT with the full original dataset. 
Specifically, for the SFT baseline, we fully fine-tune Mistral-7B on Ultrachat200k for three epochs and report first epoch performance as the starting point (with x-axis 0) in the figure for SFT. 
For $\method$, we report the \texttt{zephyr-7b-sft-full} checkpoint as the starting point, which has also been fine-tuned on Ultrachat200k for one epoch. 
We select the training size of $\method$ at iteration 0 to be 14k, 26k, and 50k and generate the data accordingly, ensuring that the larger dataset encompasses the smaller dataset. 
The performance of $\method$ was then evaluated after 1 epoch of self-play fine-tuning for each training size. 
We can observe that, while $\method$ results in notable improvement with increasing training sizes, SFT on further epochs 2 and 3 fails to yield more than $1\%$ improvement. 
Additional results are deferred to Appendix~\ref{app:exp}.

\noindent \textbf{Iterative Training v.s. Training for More Epochs.} 
We further study the training within iteration $0$ and compare with the performance achieved in iteration $1$, particularly contrasting the test performance obtained from extended training duration with that from next iteration. 
Figure~\ref{fig:ablation_epoch} depicts the performance trajectory of the model trained using $\method$ over multiple epochs at iteration 0. 
It is evident that the most substantial improvement occurs during the first two epochs, followed by only modest gains in subsequent epochs. 
Notably, $\method$ exhibits robustness and stability; extending the training duration does not diminish performance but rather maintains a rather consistent level. 
Nevertheless, the observation suggests an inherent limitation to the performance achievable within a single iteration, thereby underscoring the necessity for iterative training. 
As shown by the test performance achieved at iteration 1 in the figures, extending the training in iteration 0 fails to reach the performance comparable to iteration 1.

\section{Conclusion and Discussion} 
This paper introduces a novel fine-tuning method $\method$, to convert a weak LLM to a strong LLM by unleashing the full power of human-annotated data. 
Central to this method is a self-play mechanism, wherein a main player (the LLM) is fine-tuned to differentiate the responses of opponent player (the LLM from previous iteration) from the target data distribution, and the LLM is iteratively aligned with the target data distribution. 
Therefore, $\method$ facilitates the LLM's iterative self-evaluation and enhancement through self-play.
In comparison to supervised fine-tuning and RL fine-tuning methods, $\method$ enables the LLM to self-improve without additional human data or feedback from stronger LLMs. Empirical results demonstrate that $\method$ significantly enhances LLM performance across diverse benchmarks, even outperforming models trained with additional human data or AI feedback.

\noindent \textbf{Limitation and Future Work.} Our theoretical results demonstrate that the optimization process of $\method$ converges if and only if the LLM's distribution aligns with $p_{\mathrm{data}}$. 
Therefore, our study focuses on a fixed target data distribution generated by humans, which inherently imposes a ceiling on the performance of fine-tuned LLM. 
Exploring the dynamically changing target data distribution is an important direction to overcome this limitation and elevate the LLM's performance beyond this ceiling or even to a super-human level. 
Moreover, considering the resource demands of synthetic data generation, another promising avenue for further exploration is to reduce the volume of required synthetic data.

\section*{Acknowledgement}
We thank the anonymous reviewers and area chair for their helpful comments. ZC, YD, HY, KJ, and QG are supported in part by the National Science Foundation CAREER Award 1906169, IIS-2008981, CHE-2247426 and the Sloan Research Fellowship. The views and conclusions contained in this paper are those of the authors and should not be interpreted as representing any funding agencies.

 \section*{Impact Statement}

This paper presents work whose goal is to advance the field of Large Language Models. 
We believe that our work contribute meaningfully to the field, specifically on leveraging synthetic data to enhance LLM without the requirement for human preference annotations. 
The synthetic data generated by $\method$ may be used to further augment the training of various language models.
Moreover, $\method$ demonstrated a substantial improvement in LLMs' capabilities, opening new avenues for their application in various downstream tasks. 
This advancement underscores the transformative potential of LLM fine-tuning in both technological and societal contexts.


\bibliography{deeplearningreference, selftraining}
\bibliographystyle{icml2024}

\newpage
\appendix
\onecolumn
\section{Further Related Work}
\noindent \textbf{Curriculum Learning.} 
In deep learning, it has been observed that training models using data samples arranged in a strategically meaningful order can lead to improved performance compared to training on randomly shuffled data. 
This approach is commonly known as curriculum learning~\citep{bengio2009curriculum,soviany2022curriculum}. 
Initial studies in curriculum learning introduced efficient algorithms that adhere to an `easy-to-hard' progression \citep{valentin2009babysteps,kumar2010self,lee2011learning,zhang2015self}. 
In the field of Natural Language Processing (NLP), criteria such as sentence length and term frequency are commonly utilized \citep{cirik2016visualizing,zhang2018empirical,liu2018curriculum}. 
More recent developments include the application of curriculum learning algorithms in multi-modal learning \citep{liu2021competence,wu2022scaling}.
Our work shares a similar idea to curriculum learning, wherein the training data evolves iteratively—beginning with responses that are easy to distinguish from human-annotated data and gradually progressing to more challenging instances.

\noindent\textbf{Generative Adversarial Networks.} Generative Adversarial Networks (GANs) \citep{goodfellow2014generative} represent a distinct class of generative models, characterized by their unique adversarial process. 
To enhance training stability and data quality, \citet{mao2017least} introduced the Least Squares GAN, employing a least squares loss function for the discriminator. 
A significant advancement in GANs involves the use of Integral Probability Metrics (IPM) \citep{muller1997integral}, particularly highlighted in the development of Wasserstein GAN by \citet{arjovsky2017wasserstein}. 
This model employs IPM in its loss design, enhancing training stability.  
Since then, IPMs have become crucial in GAN design \citep{mroueh2017fisher, gulrajani2017improved}, particularly in constraining the discriminator to a specific function class, thereby preventing it from overpowering the generator. 
Furthermore, \citet{jolicoeur2018relativistic} generalized IPM-based GANs by introducing relativistic discriminator and proposed Relativistic GAN. 
It is worth noting that the objective function defined in our \eqref{eq:f-star} is similar to Relativistic GAN \citep{jolicoeur2018relativistic} and reduces to an IPM framework in Wasserstein GAN \citep{arjovsky2017wasserstein} with a linear loss. However, our approach differs in both the choice of the function class and the training procedure. 
Inspired by GAN, \citet{cheng2023adversarial} proposed an adversarial learning framework named Adversarial Preference Optimization
(APO) that trains the LLM and a reward model in an adversarial game. 
Similarly related to the spirit of our method, Generative Adversarial Imitation Learning (GAIL)~\citep{ho2016generative} was proposed to train separate discriminator and policy networks in each iteration. 
In contrast to the above methods, $\method$ relies on self-play where both the main player and the opponent player are the same LLM from two consecutive iterations.

\noindent\textbf{Alignment with AI Feedback.} 
\CC{The objective of alignment is to fine-tune LLMs to align with human intentions. 
In addition to using human demonstrations, AI feedback is emerging as a crucial component in the alignment process. 
Constitutional AI \citep{bai2022constitutional} leveraged AI feedback to align language models through a combination of both supervised learning and reinforcement learning (RL) phases. 
In the RL phase, \citep{bai2022constitutional} applied Reinforcement Learning from AI Feedback (RLAIF), training a reward model using AI-generated preferences, followed by RL using the reward. 
\citet{lee2023rlaif} demonstrated that AI feedback can achieve comparable or superior performance to human feedback in RL fine-tuning. 
They also demonstrated that RLAIF can improve upon an SFT policy even when the LLM labeler has the same size as the policy. 
\citet{saunders2022self} studied the scaling properties of self-critique and introduced a framework for comparing the critique ability to generation and discrimination ability. 
Self-critique models employ the LLM itself to generate natural language critiques through behavioral cloning, assisting human evaluators. }


\section{Experiments}\label{app:exp}
\subsection{Hyperparameters and Implementation Details}
\begin{table*}[ht]
    \centering
    \caption{Detailed information of HuggingFace Open LLM Leaderboard. For each evaluation dataset, we present the number of few-shot examples and metric adopted for evaluation.}
    \resizebox{0.7\textwidth}{!}{%
    \begin{tabular}{c | c c c c c c c}
    \toprule
        Datasets & Arc & TruthfulQA & Winogrande & GSM8k & HellaSwag & MMLU \\
        \midrule
        \# few-shot & 25 & 0 & 5 & 5 & 10 & 5  \\
        Metric & \texttt{acc\_norm} & \texttt{mc2} & \texttt{acc} & \texttt{acc} & \texttt{acc\_norm} & \texttt{acc}  \\
    \bottomrule
    \end{tabular}%
    }
    \label{tab:open-llm-info}
\end{table*}
We use the Alignment Handbook library~\citep{alignment_handbook2023} as the codebase for our self-play fine-tuning method $\method$, which includes DeepSpeed ZeRO-3~\citep{rajbhandari2020zero} and FlashAttention-2~\citep{dao2023flashattention} to reduce training cost. We train our models with RMSProp~\citep{hinton2012neural} optimizer with no weight decay for all iterations as commonly used in fine-tuning LLMs for alignment, with a global batch size of $64$, $10\%$ warmup steps and bfloat16 precision.  
We set the peak learning rate to be 5e-7 for iterations 0 and 1, and decay this peak learning rate to 1e-7 for iteration 2 and 3 as we are approaching the end of self-play fine-tuning. Lastly, we choose $\beta=0.1$ and max sequence length to be $2048$ tokens as in \citet{alignment_handbook2023}. We note that at the last iteration (iter-3) where the  model is close to convergence, we increase the value of $\beta$ to $5.0$. 
We use the Accelerate library~\citep{accelerate} to generate our synthetic data using distributed inference with multiple GPUs with a global batch size of $64$. We consider the prompting template ``\#\#\# Instruction: \{prompt\}\textbackslash n\textbackslash n\#\#\# Response: '' as commonly used in~\citet{alpaca}. For Ultrachat200k containing multi-round conversations, we only sample the first round as our prompt and ground truth completion pairs.

\subsection{Training Overhead}
\CC{The cost overhead introduced by SPIN is mostly the generation of synthetic data from the LLM that we train. The cost of the fine-tuning process remains computationally equal to that of SFT and DPO. We report both the generation and training time for SPIN in Table~\ref{tab:times} . Results were obtained using a machine with 8xA100 (80G) GPUs. For per 64 examples, the generation time and training time are 6.69s and 10s respectively.}

\begin{table}[h!]
    \caption{Generation and Training Times for Different Iterations}
    \centering
    \begin{tabular}{c|cc|cc|cc|cc}
       \toprule
       Iteration & \multicolumn{2}{c|}{Iter 0} & \multicolumn{2}{c|}{Iter 1} & \multicolumn{2}{c|}{Iter 2} & \multicolumn{2}{c}{Iter 3} \\ \midrule
          Process& Generation  & Training  & Generation  & Training  & Generation  & Training  & Generation  & Training  \\  \midrule
        Time  & 1.45h & 4.32h & 1.45h & 8.64h & 1.45h & 8.64h & 1.45h & 8.64h \\     \bottomrule
    \end{tabular}
    \label{tab:times}
\end{table}

\CC{It is evident that the generation time is dominated by the training time at each iteration. The estimated time in Table~\ref{tab:times} is based on the fact that we generate 50k examples per iteration. Please note that the doubled training time from iter 1 to iter 3 is attributed to the utilization of a double-sized dataset (the combination of 50k synthetic data from the previous iteration and 50k synthetic data in the current iteration), as discussed in our Section~\ref{sec:exp}.}

\subsection{Additional Experiment Result for SPIN+DPO} 
\CC{\method{} requires only the SFT data to improve over the traditional SFT stage and can sit between SFT and RL finetuning. Suppose additional preference data is provided, we can use the additional data to further improve the performance of the model after SPIN using RL fine-tuning. }

Starting at \method{} iteration 3, we further train the model with DPO for two epochs on the $62$k preference data from the UltraFeedback Binarized dataset~\citep{cui2023ultrafeedback}, which consists of both chosen and rejected responses evaluated by GPT-4. Detailed performances are presented in Table~\ref{tab:additionspindpo}.

\begin{table*}[ht]
    \centering
    \caption{Performance of $\method$ + DPO based on \texttt{zephyr-7b-sft-full} across HuggingFace Open LLM Leaderboard datasets, compared with all baselines. We also denote the average improvement over last iteration in the Average column.}
    \resizebox{0.9\textwidth}{!}{%
    \begin{tabular}{c | c c c c c c c}
    \toprule
        Model & Arc & TruthfulQA & Winogrande & GSM8k & HellaSwag & MMLU & Average \\
        \midrule
        \texttt{zephyr-7b-dpo-full} & 63.65 & 55.19 & 72.61 &  33.43 & 84.44 & 58.52 & 61.31 \\
        \midrule
        \texttt{zephyr-7b-sft-full} & 60.41 & 43.73 & 74.19 & 26.76 & 82.85 & 60.92 & 58.14 \\
        $\method$ iteration $0$ & 63.40 & 49.18 & 72.69 & 35.10 & 84.38 & 60.03 & $60.80_{\textcolor{ao}{(+2.66)}}$ \\
        $\method$ iteration $1$ & 65.19 & 55.17 & 72.30 & 35.78 & 84.96 & 59.34 & $62.12_{\textcolor{ao}{(+1.32)}}$ \\
        $\method$ iteration $2$ & 65.96	& 54.91 & 73.56 & 38.06 & 85.41 & 59.93 & $62.97_{\textcolor{ao}{(+0.85)}}$ \\
        $\method$ iteration $3$ & 65.87 & 54.90 & 73.72 & 38.97 & 85.54 & 59.99 & $63.16_{\textcolor{ao}{(+0.19)}}$ \\
        \rowcolor{LightCyan} $\method$ iteration $3$ + DPO& 66.47 & 60.07 & 78.06 & 37.98 & 86.17 & 59.68 & $64.05_{\textcolor{ao}{(+0.89)}}$ \\
    \bottomrule
    \end{tabular}%
    }
    \label{tab:additionspindpo}
\end{table*}

We can observe that the checkpoint trained by $\method$ can be further improved using DPO, yielding an enhancement of $0.89\%$ on average. Notably, the improvement is particularly significant on the TruthfulQA benchmark with around $5\%$.

\subsection{Further Experiment Results}
In Table~\ref{tab:main}, we show the detailed performance of $\method$ at different iterations on each of the task in Open LLM Leaderboard. In Table~\ref{tab:ablation_data}, we also show the performance of SFT from \texttt{zephyr-7b-sft-full} on Ultrachat200k for one epoch. 
While self-play fine-tuning with synthetic data from \texttt{zephyr-7b-sft-full} effectively improves its performance, simply fine-tuning it again on the SFT data leads to degraded performance, as similarly observed in Figure~\ref{fig:training_size}. 

\begin{table*}[ht]
    \centering
    \caption{Test performance of $\method$ based on \texttt{zephyr-7b-sft-full} across HuggingFace Open LLM Leaderboard datasets. We also denote the average improvement over last iteration in the Average column.}
    \resizebox{0.9\textwidth}{!}{%
    \begin{tabular}{c | c c c c c c c}
    \toprule
        Model & Arc & TruthfulQA & Winogrande & GSM8k & HellaSwag & MMLU & Average \\
        \midrule
        \texttt{zephyr-7b-sft-full} & 60.41 & 43.73 & 74.19 & 26.76 & 82.85 & 60.92 & 58.14 \\
        $\method$ iteration $0$ & 63.40 & 49.18 & 72.69 & 35.10 & 84.38 & 60.03 & $60.80_{\textcolor{ao}{(+2.66)}}$ \\
        $\method$ iteration $1$ & 65.19 & 55.17 & 72.30 & 35.78 & 84.96 & 59.34 & $62.12_{\textcolor{ao}{(+1.32)}}$ \\
        $\method$ iteration $2$ & 65.96	& 54.91 & 73.56 & 38.06 & 85.41 & 59.93 & $62.97_{\textcolor{ao}{(+0.85)}}$ \\
        $\method$ iteration $3$ & 65.87 & 54.90 & 73.72 & 38.97 & 85.54 & 59.99 & $63.16_{\textcolor{ao}{(+0.19)}}$ \\
    \bottomrule
    \end{tabular}%
    }
    \label{tab:main}
\end{table*}
\begin{table*}[ht]
    \centering
    \caption{Test performance of \texttt{zephyr-7b-sft-full} fine-tuned on Ultrachat200k for 1 more epoch across HuggingFace Open LLM benchmark datasets. SFT fails to further leverage the fine-tuning data for performance enhancement and even results in degraded performance.}
    \resizebox{0.78\textwidth}{!}{%
    \begin{tabular}{c | c c c c c c c}
    \toprule
        Model & Arc & TruthfulQA & Winogrande & GSM8k & HellaSwag & MMLU & Average \\
        \midrule
        \texttt{zephyr-7b-sft-full} & 60.41 & 43.73 & 74.19 & 26.76 & 82.85 & 60.92 & 58.14 \\
        \rowcolor{red!20}SFT epoch 1 & \textcolor{red}{57.76} & 44.39 & 75.77 & 25.85 & 81.69 & \textcolor{red}{57.89} & \textcolor{red}{57.23} \\
    \bottomrule
    \end{tabular}%
    }
    \label{tab:ablation_data}
\end{table*}

\noindent \textbf{Further Investigation on More Tasks.}
Here, we further investigate the performance of $\method$ on a broader variety of tasks, including MT-Bench~\citep{zheng2023judging}, Big-Bench~\citep{srivastava2023beyond} and OpenBookQA~\citep{mihaylov2018can} in addition to the Open LLM Leaderboard tasks. 
Specifically, we use the following tasks from Big-Bench-Hard for a more comprehensive evaluation, including Causal Judgment (causal reasoning), Sports Understanding (commonsense reasoning) and Formal Fallacies (logical reasoning). 
In Table~\ref{tab:ablation_bbh}, we show the resulting scores of $\method$ on MT-Bench as well as those tasks from Big-Bench. 
In Figure~\ref{fig:mtbench}, we detail the model performances on MT-Bench with regard to different types of questions. 
We can see a notably robust improvement in the performance of $\method$ on various tasks besides the HuggingFace Benchmark, without major degradation. Notably, on MT-Bench, the model fine-tuned by $\method$ has surpassed the performance of \texttt{vicuna-13b-v1.5}~\citep{vicuna2023} with a score of $6.57$. 
\begin{table*}[ht]
    \centering
    \caption{Test performance on other reasoning benchmark datasets for $\method$ at different iterations and \texttt{zephyr-7b-sft-full}. We report the average score for MT-Bench and the accuracy score for Big Bench datasets under standard few-shot CoT evaluation. On OpenBookQA, we report \texttt{acc\_norm} with 1-shot example as used in~\citet{anil2023palm}. As similar to Open LLM Leaderboard evaluation, we observe a steady improvement in performance on the other benchmark tasks, with no significant degradation.}
    \resizebox{0.8\textwidth}{!}{%
    \begin{tabular}{c | c | c c c | c}
    \toprule
        Model & MT-Bench & BB-causal & BB-formal & BB-sports & OpenBookQA \\
        \midrule
        \texttt{zephyr-7b-sft-full} & 5.94 & 56.15 & 49.6 & 96.0 & 45.4\\
        $\method$ iteration $0$ & $6.46_{\textcolor{ao}{(+0.52)}}$ & 57.75 & 51.6 & 95.2 & 46.8\\
        $\method$ iteration $1$ & $6.65_{\textcolor{ao}{(+0.19)}}$ & 58.82 & 51.2 & 95.2 & 47.2\\
        $\method$ iteration $2$ & $6.78_{\textcolor{ao}{(+0.13)}}$ & 59.36 & 51.2 & 94.4 & 47.6\\
    \bottomrule
    \end{tabular}%
    }
    \label{tab:ablation_bbh}
\end{table*}
\begin{figure}
    \centering
    \includegraphics[width=0.45\textwidth]{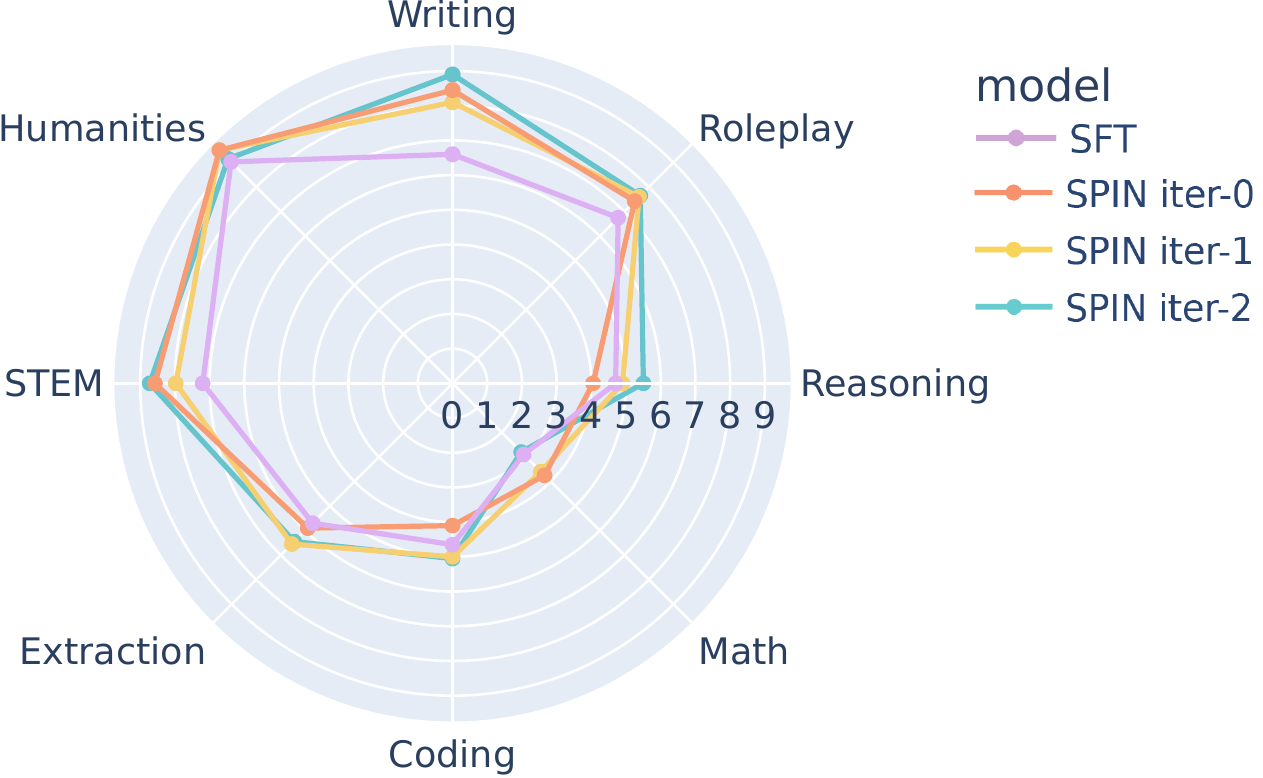}
    \caption{Model performance on MT-Bench. We compare $\method$ across different iterations with the base SFT model. Starting from iteration 1, our fine-tuned model by $\method$ robustly outperforms the SFT checkpoint on all evaluation aspects.}
    \label{fig:mtbench}
\end{figure}

\subsection{Generation Examples}
In Tables~\ref{tab:app_exp1} and~\ref{tab:app_exp2}, we further provide the generation examples of our fine-tuned model by $\method$ from different iterations. We can observe an improvement in response quality as compared to the generation of the SFT checkpoint. Meanwhile, the model generations at higher iterations typically becomes more concise than iteration $0$ and resemble the ground truth completion better. 

\begin{table}[h!]
\caption{Generation example of our fine-tuned model by $\method$ at different iterations.}
\begin{tabularx}{\textwidth}{l|X}
\toprule
Prompt & How does the economic growth of Horsham compare to nearby towns? \\
\midrule
Ground truth & I do not have access to the latest economic data on horsham and nearby towns. However, you can check the latest economic statistics and reports provided by the local government or the official websites of horsham and nearby towns for a detailed comparison of their economic growth. \\
\midrule
\texttt{zephyr-7b=sft-full} & According to the given material, Horsham was growing "significantly" faster than nearby towns, indicating that it was economically thriving.\textbackslash n However, the given material does not specify the exact rates or levels of economic growth of Horsham compared to nearby towns. Without this information, it is challenging to determine whether the differences in economic growth between Horsham and nearby towns were significant or minimal.\\
\midrule
Iteration 0 & I do not have access to the latest economic data for horsham and nearby towns. However, horsham is known to be a prosperous town in west sussex, uk, with a diverse economy that includes agriculture, manufacturing, and services. The town has a strong retail sector, and it is also home to several large companies, including the global pharmaceutical company novartis. In terms of comparison with nearby towns, horsham is generally considered to be one of the more affluent towns in the region. However, the economic growth of horsham and its neighboring towns may vary depending on various factors such as industry, investment, and government policies.\\
\midrule
Iteration 1 & I do not have access to real-time data or information about the most recent economic growth of horsesham and nearby towns. However, according to the recent reports from various sources such as the office for national statistics, horsesham's economy primarily relies on retail, healthcare, education, and public administration industries. Compared to nearby towns such as lewes, chichester, and bognor regis, horsesham's economy may have a different composition and growth rate based on their specific industries and economic policies.\\
\midrule
Iteration 2 & I don't have access to the most recent economic data. However, according to a 2019 report by pwc, horsham’s expected economic growth up to 2025 was projected to be around $2.3\%$ per annum, which is slightly below the average for the south east of england ($2.8\%$). Neighbouring towns like crawley and bognor regis were projected to experience similar growth rates. note: this may not reflect the impact of the covid-19 pandemic that occurred in 2020.\\
\bottomrule
\end{tabularx}
\label{tab:app_exp1}
\end{table}

\begin{table}[h!]
\caption{Another generation example of our fine-tuned model by $\method$ at different iterations.}
\begin{tabularx}{\textwidth}{l|X}
\toprule
Prompt & Can you clarify if Poppelsdorfer Allee 45 and the old site of the Mathematical Institute are the same location? Answer according to: Go straight to Poppelsdorfer Allee 45 and sign-in.\textbackslash n There is a big lecture hall at the old site of the Mathematical Institute (Wegelerstr. 10, Bonn). \\
\midrule
Ground truth & No, Poppelsdorfer Allee 45 and the old site of the Mathematical Institute (Wegelerstr. 10, Bonn) are not the same location. To sign-in, one should go straight to Poppelsdorfer Allee 45. However, there is a big lecture hall at the old site of the Mathematical Institute. \\
\midrule
\texttt{zephyr-7b=sft-full} & Yes, Poppelsdorfer Allee 45 and the old site of the Mathematical Institute are the same location.\\
\midrule
Iteration 0 & Yes, Poppelsdorfer Allee 45 and the old site of the Mathematical Institute (Wegelerstr. 10, Bonn) are the same location. The Mathematical Institute has been relocated and renamed as the Institute for Mathematics and Theoretical Physics, but the building where it used to be is still referred to as the old site. The big lecture hall at the old site still exists and may be used for events or classes.\\
\midrule
Iteration 1 & No, Poppelsdorfer Allee 45 and the old site of the Mathematical Institute are not the same location. While Poppelsdorfer Allee 45 is where attendees should sign-in, the old site of the Mathematical Institute (located at Wegelerstr. 10, Bonn) has a large lecture hall mentioned in the statement. Therefore, it can be assumed that the two locations are separate and not the same.\\
\midrule
Iteration 2 & Poppelsdorfer Allee 45 and the old site of the Mathematical Institute (Wegelerstr. 10, Bonn) are two different locations. As per the instruction, you need to go straight to Poppelsdorfer Allee 45 and sign-in. However, there is a lecture hall mentioned at the old site of the Mathematical Institute (Wegelerstr. 10, Bonn).\\
\bottomrule
\end{tabularx}
\label{tab:app_exp2}
\end{table}
\newpage
\section{Proof of Theorems in Section \ref{sec:thm} }\label{sec:proof_thm}
\subsection{Proof of Theorem~\ref{thm:stop}}
\begin{proof}[Proof of Theorem~\ref{thm:stop}]
To begin with, we prove the ``Sufficiency'' in Theorem~\ref{thm:stop}. Since $p_{\mathrm{data}}(\cdot|\xb)= p_{\btheta_t}(\cdot|\xb)$, by symmetry property of $\yb$ and $\yb'$, we have for any $\btheta \in \bTheta$ that 
\begin{align*}
2L_{\method}(\btheta, \btheta_t) &= \EE_{\xb \sim q(\cdot), \yb \sim p_{\mathrm{data}}(\cdot|\xb), \yb' \sim p_{\btheta_t}(\cdot|\xb)}\bigg[\ell\bigg(\gamma \log \frac{p_{\btheta}(\yb | \xb)}{p_{\btheta_t}(\yb | \xb)}-\gamma \log \frac{p_{\btheta}(\yb' | \xb)}{p_{\btheta_t}(\yb' | \xb)}\bigg)\bigg] \\
&\qquad+ \EE_{\xb \sim q(\cdot), \yb' \sim p_{\mathrm{data}}(\cdot|\xb), \yb \sim p_{\btheta_t}(\cdot|\xb)}\bigg[\ell\bigg(\gamma \log \frac{p_{\btheta}(\yb | \xb)}{p_{\btheta_t}(\yb | \xb)}-\gamma \log \frac{p_{\btheta}(\yb' | \xb)}{p_{\btheta_t}(\yb' | \xb)}\bigg)\bigg]\\
&= \EE_{\xb \sim q(\cdot), \yb \sim p_{\mathrm{data}}(\cdot|\xb), \yb' \sim p_{\btheta_t}(\cdot|\xb)}\bigg[\ell\bigg(\gamma \log \frac{p_{\btheta}(\yb | \xb)}{p_{\btheta_t}(\yb | \xb)}-\gamma \log \frac{p_{\btheta}(\yb' | \xb)}{p_{\btheta_t}(\yb' | \xb)}\bigg)\\
&\qquad + \ell\bigg(\gamma \log \frac{p_{\btheta}(\yb' | \xb)}{p_{\btheta_t}(\yb' | \xb)}-\gamma \log \frac{p_{\btheta}(\yb | \xb)}{p_{\btheta_t}(\yb | \xb)}\bigg)\bigg]\\
&\geq 2\EE_{\xb \sim q(\cdot), \yb \sim p_{\mathrm{data}}(\cdot|\xb), \yb' \sim p_{\btheta_t}(\cdot|\xb)}\bigg[\ell\bigg(\frac{\gamma}{2} \log \frac{p_{\btheta}(\yb | \xb)}{p_{\btheta_t}(\yb | \xb)}-\frac{\gamma}{2}\log \frac{p_{\btheta}(\yb' | \xb)}{p_{\btheta_t}(\yb' | \xb)}\\
&\qquad + \frac{\gamma}{2} \log \frac{p_{\btheta}(\yb' | \xb)}{p_{\btheta_t}(\yb' | \xb)}-\frac{\gamma}{2} \log \frac{p_{\btheta}(\yb | \xb)}{p_{\btheta_t}(\yb | \xb)}\bigg)\bigg]\\
&= 2\ell(0),
\end{align*}
where the inequality is due to Jensen's inequality (recalling that $\ell$ is convex in Assumption \ref{assm:1}). Therefore, we have that $L_{\method}(\btheta, \btheta_t) \geq \ell(0) = L_{\method}(\btheta_t, \btheta_t)$, which means that $\btheta_t$ is the global optimum of \eqref{eq:loss}. As a consequence, the gradient at the point $\btheta_t$ is zero, which concludes $\btheta_{t+1} = \btheta_{t}$.  

Next, we prove the ``Necessity''. Define $g(\lambda)$ as follows: 
\begin{align*}
g(\lambda) = \EE_{\xb \sim q(\cdot), \yb \sim p_{\mathrm{data}}(\cdot|\xb), \yb' \sim p_{\btheta_t}(\cdot|\xb)}\bigg[\ell\bigg(\lambda \log \frac{p_{\mathrm{data}}(\yb | \xb)}{p_{\btheta_t}(\yb | \xb)}-\lambda \log \frac{p_{\mathrm{data}}(\yb' | \xb)}{p_{\btheta_t}(\yb' | \xb)}\bigg)\bigg].\label{eq:}   
\end{align*}
Then we have $g(0) = \ell(0)$ and 
\begin{align*}
g'(0) &= \EE_{\xb \sim q(\cdot), \yb \sim p_{\mathrm{data}}(\cdot|\xb), \yb' \sim p_{\btheta_t}(\cdot|\xb)}\bigg[\ell'(0)\bigg(\log \frac{p_{\mathrm{data}}(\yb | \xb)}{p_{\btheta_t}(\yb | \xb)}-  \log \frac{p_{\mathrm{data}}(\yb' | \xb)}{p_{\btheta_t}(\yb' | \xb)}\bigg)\bigg]\\
&= \ell'(0) \bigg( \EE_{\xb \sim q(\cdot), \yb \sim p_{\mathrm{data}}(\cdot|\xb)}\bigg[\log \frac{p_{\mathrm{data}}(\yb | \xb)}{p_{\btheta_t}(\yb | \xb)}\bigg] - \EE_{\xb \sim q(\cdot), \yb' \sim p_{\btheta_t}(\cdot|\xb)}\bigg[\log \frac{p_{\mathrm{data}}(\yb' | \xb)}{p_{\btheta_t}(\yb' | \xb)}\bigg] \bigg)\\
&= \ell'(0)\Big[\mathrm{KL}\big(p_{\mathrm{data}}(\cdot|\xb)\big|\big|p_{\btheta_t}(\cdot|\xb)\big) + \mathrm{KL}\big(p_{\btheta_t}(\cdot|\xb)\big|\big|p_{\mathrm{data}}(\cdot|\xb)\big)\Big] \\
&< 0,
\end{align*}
where the last inequality is due to the condition that $\ell'(0) < 0$. Therefore, there exist a $\lambda_0$ such that for all $0 < \lambda < \lambda_0$, we have $g(\lambda) < \ell(0)$. Choose $\btheta^{*}$ such that $p_{\btheta^{*}}(\yb|\xb) = p_{\mathrm{data}}(\yb|\xb)$. For those $0 < \lambda < \lambda_0$, we have that 
\begin{align*}
L_{\method}(\btheta^{*}, \btheta_{t}) &= \EE_{\xb \sim q(\cdot), \yb \sim p_{\btheta^{*}}(\cdot|\xb), \yb' \sim p_{\btheta_t}(\cdot|\xb)}\bigg[\ell\bigg(\lambda \log \frac{p_{\btheta^{*}}(\yb | \xb)}{p_{\btheta_t}(\yb | \xb)}-\lambda \log \frac{p_{\btheta^{*}}(\yb' | \xb)}{p_{\btheta_t}(\yb' | \xb)}\bigg)\bigg]\\
&=\EE_{\xb \sim q(\cdot), \yb \sim p_{\mathrm{data}}(\cdot|\xb), \yb' \sim p_{\btheta_t}(\cdot|\xb)}\bigg[\ell\bigg(\lambda \log \frac{p_{\mathrm{data}}(\yb | \xb)}{p_{\btheta_t}(\yb | \xb)}-\lambda \log \frac{p_{\mathrm{data}}(\yb' | \xb)}{p_{\btheta_t}(\yb' | \xb)}\bigg)\bigg]\\
& = g(\lambda) \\
& < g(0)\\
&= L_{\method}(\btheta_{t}, \btheta_{t}),
\end{align*}
where the second equality holds by the choice of $p_{\btheta^{*}}(\cdot|\xb)$, and the inequality holds due to the choice of $\lambda$. Therefore, we conclude that $\btheta_t$ is not the global optimum of \eqref{eq:loss} if $p_{\btheta_t}(\cdot|\xb)  \not= p_{\mathrm{data}}(\cdot|\xb)$.
\end{proof}

\subsection{Proof Theorem~\ref{thm:main2}}
We need the following auxiliary lemma before we prove Theorem~\ref{thm:main2}.
\begin{lemma}\label{lm:aux}
Suppose that $\ell(t) = \log(1 + \exp(-t))$ and for $a, b > 0$, the following inequality holds
\begin{align*}
 a\ell(t) + b\ell(-t) \geq a\log(1+b/a) + b\log(1 + a/b),     
\end{align*}
the equality holds if and only if $t = \log(a/b)$. 
\end{lemma}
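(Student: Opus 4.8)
\textbf{Proof proposal for Lemma~\ref{lm:aux}.}

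The plan is to treat the left-hand side as a function of $t$ and minimize it using elementary calculus, then identify the minimizer and the minimum value. Define $h(t) = a\ell(t) + b\ell(-t) = a\log(1+e^{-t}) + b\log(1+e^{t})$ for fixed $a,b>0$. First I would compute the derivative: using $\ell'(t) = -\frac{e^{-t}}{1+e^{-t}} = -\frac{1}{1+e^{t}}$, we get
\begin{align*}
h'(t) = -\frac{a}{1+e^{t}} + \frac{b\,e^{t}}{1+e^{t}} = \frac{b\,e^{t}-a}{1+e^{t}}.
\end{align*}
Setting $h'(t)=0$ yields $e^{t} = a/b$, i.e. $t^\star = \log(a/b)$, which is well-defined since $a,b>0$. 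To confirm this is the unique global minimum, I would note that $h'(t)<0$ for $t<t^\star$ and $h'(t)>0$ for $t>t^\star$ (the numerator $be^t-a$ is strictly increasing in $t$), so $h$ is strictly decreasing then strictly increasing; hence $t^\star$ is the unique minimizer. Alternatively one can invoke convexity of $\ell$ (hence of $h$) together with the first-order condition for strict convexity to get uniqueness — either route is routine.

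Next I would evaluate $h(t^\star)$. At $t^\star = \log(a/b)$ we have $e^{-t^\star} = b/a$ and $e^{t^\star} = a/b$, so
\begin{align*}
h(t^\star) = a\log\!\big(1 + b/a\big) + b\log\!\big(1 + a/b\big),
\end{align*}
which is exactly the claimed right-hand side. Therefore $h(t) \ge h(t^\star)$ for all $t$, with equality if and only if $t = \log(a/b)$, establishing both the inequality and the equality condition.

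I do not anticipate a genuine obstacle here; the lemma is a one-variable optimization problem and the only care needed is (i) correctly simplifying the logistic derivative and (ii) arguing uniqueness of the critical point rather than merely that it is a critical point. If one prefers to avoid calculus entirely, an alternative is to substitute $u = e^{t} > 0$, write $h = a\log\frac{1+u}{u} + b\log(1+u)$ as a function of $u$, and minimize over $u>0$; or to apply the weighted AM–GM / Gibbs-type inequality directly to the two terms. I would go with the calculus argument as the cleanest to present, and this lemma will then feed directly into the proof of Theorem~\ref{thm:main2} by being applied pointwise (for each $\xb$) with $a = p_{\mathrm{data}}(\yb|\xb)$-type and $b = p_{\btheta_t}(\yb|\xb)$-type weights inside the expectation defining $L_{\method}$.
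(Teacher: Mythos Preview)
Your proposal is correct and follows essentially the same approach as the paper: define the function $a\ell(t)+b\ell(-t)$, compute its derivative, observe the sign change at $t=\log(a/b)$ to conclude it is the unique minimizer, and identify the minimum value. Your write-up is actually slightly more complete, since you explicitly evaluate $h(t^\star)$ to match the right-hand side, whereas the paper leaves that substitution implicit.
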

\begin{proof}[Proof of Lemma~\ref{lm:aux}]
Define $g(t) =  a\ell(t) + b\ell(-t) =  a\log(1+\exp(-t)) + b\log(1+\exp(t))$, then we have 
\begin{align*}
g'(t) = -\frac{a\exp(-t)}{1+\exp(-t)} +   \frac{b\exp(t)}{1+\exp(t)} = \frac{-a+b\exp(t)}{1+\exp(t)}.
\end{align*}
Therefore, $g'(t) < 0$ when $t < \log(a/b)$, $g'(t) > 0$ when $t > \log(a/b)$, which indicates that $g$ achieves it minimum at $t = \log(a/b)$ which concludes the proof.
\end{proof}

Lemma~\ref{lm:aux} shows that the global minimum of $a\ell(t) + b\ell(-t)$ is achieved when $t = \log(a/b)$. Based on Lemma~\ref{lm:aux}, we can further prove that \eqref{eq:f-star} with the logistic loss function has a closed-form solution if we ignore the constraint set $\cF_{t}$. 

\begin{lemma}\label{thm:closed-form solution}
Denote $p_{+}(\yb, \yb', \xb) = q(\xb)\cdot p_{\mathrm{data}}(\yb|\xb)\cdot p_{\btheta_t}(\yb'|\xb)$ and $p_{-}(\yb, \yb', \xb) = q(\xb)\cdot p_{\btheta_t}(\yb'|\xb) \cdot p_{\mathrm{data}}(\yb|\xb)$, 
\begin{align*}
\EE_{\xb\sim q(\cdot), \yb\sim p_{\mathrm{data}}(\cdot|\xb), y'\sim p_{\btheta_t}(\cdot|\xb)}\big[ \ell\big(f(\xb, \yb) - f(\xb, \yb')\big) 
\big] \geq \log2 - \mathrm{JSD}(p_{+}\|p_{-}),
\end{align*}
where $\mathrm{JSD}(p_{+}\|p_{-})$ represents the Jensen–Shannon divergence which is defined as follows
\begin{equation*}
    \mathrm{JSD}\Big(p \Big\|q\Big) = \frac{1}{2}\mathrm{KL}\Big(p \Big\|\frac{p+q}{2}\Big) +  \frac{1}{2}\mathrm{KL}\Big(q \Big\|\frac{p+q}{2}\Big),
\end{equation*}
where $\mathrm{KL}(\cdot\|\cdot)$ is KL-divergence. $\mathrm{JSD}$ is always non-negative and equals zero if and only if $p_{+}$ and $p_{-}$ are identical. Moreover, the global minimum value $\log 2 - \mathrm{JSD}(p_{+}\|p_{-})$ is achieved by $f^{*}$ if and only if, 
\begin{align*}
f^{*}(\xb, \yb) =   Z(\xb) + \log\bigg(\frac{p_{\mathrm{data}}(\yb|\xb)}{p_{\btheta_{t}}(\yb|\xb)}\bigg),   
\end{align*}
where $Z(\xb)$ is any function that is possibly dependent on $\xb$. 
\end{lemma}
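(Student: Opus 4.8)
The plan is to reduce the claim to a pointwise application of Lemma~\ref{lm:aux} through a symmetrization trick, and then to recognize the resulting lower bound as the Jensen--Shannon divergence. First I would rewrite the expectation as an integral: since the sampling distribution $\xb\sim q(\cdot),\ \yb\sim p_{\mathrm{data}}(\cdot|\xb),\ \yb'\sim p_{\btheta_t}(\cdot|\xb)$ has density exactly $p_{+}(\yb,\yb',\xb)$, the objective equals $\int p_{+}(\yb,\yb',\xb)\,\ell\big(f(\xb,\yb)-f(\xb,\yb')\big)$. Relabeling the dummy variables $\yb\leftrightarrow\yb'$ sends $p_{+}$ to $p_{-}$ and flips the sign of the argument of $\ell$, yielding the identity $\int p_{+}\,\ell\big(f(\xb,\yb)-f(\xb,\yb')\big)=\int p_{-}\,\ell\big(f(\xb,\yb')-f(\xb,\yb)\big)$. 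Averaging the two representations writes the objective as $\tfrac12\int\big[p_{+}\,\ell(t)+p_{-}\,\ell(-t)\big]$ with $t=f(\xb,\yb)-f(\xb,\yb')$.

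Next I would apply Lemma~\ref{lm:aux} pointwise in $(\xb,\yb,\yb')$ with $a=p_{+}(\yb,\yb',\xb)$ and $b=p_{-}(\yb,\yb',\xb)$, giving $p_{+}\ell(t)+p_{-}\ell(-t)\ge p_{+}\log\frac{p_{+}+p_{-}}{p_{+}}+p_{-}\log\frac{p_{+}+p_{-}}{p_{-}}$, with equality iff $t=\log(p_{+}/p_{-})$. Integrating this and using $\int p_{+}=\int p_{-}=1$ (each is a product of probability densities), I would insert a factor $2$ inside each logarithm to turn $\tfrac12\int p_{+}\log\frac{p_{+}+p_{-}}{p_{+}}$ into $\tfrac{\log 2}{2}-\tfrac12\mathrm{KL}\big(p_{+}\|\tfrac{p_{+}+p_{-}}{2}\big)$, and likewise for the $p_{-}$ term; summing recovers exactly $\log 2-\mathrm{JSD}(p_{+}\|p_{-})$. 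The standard facts that $\mathrm{JSD}\ge 0$, with equality iff $p_{+}=p_{-}$, follow from non-negativity of KL and its equality case.

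Finally, for the characterization of the minimizer I would trace back the equality condition from Lemma~\ref{lm:aux}: optimality forces $f(\xb,\yb)-f(\xb,\yb')=\log\frac{p_{+}(\yb,\yb',\xb)}{p_{-}(\yb,\yb',\xb)}$ for almost every $(\xb,\yb,\yb')$. Since $q(\xb)$ cancels in the ratio, this reads $f(\xb,\yb)-\log\frac{p_{\mathrm{data}}(\yb|\xb)}{p_{\btheta_t}(\yb|\xb)}=f(\xb,\yb')-\log\frac{p_{\mathrm{data}}(\yb'|\xb)}{p_{\btheta_t}(\yb'|\xb)}$ for all $\yb,\yb'$, i.e., the quantity $f(\xb,\yb)-\log\frac{p_{\mathrm{data}}(\yb|\xb)}{p_{\btheta_t}(\yb|\xb)}$ depends on $\xb$ only; denoting it $Z(\xb)$ gives the claimed form of $f^{*}$. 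I expect the main obstacle to be bookkeeping rather than conceptual: getting the $\yb\leftrightarrow\yb'$ relabeling and the sign flip in $\ell(-t)$ exactly right so that $a\ell(t)+b\ell(-t)$ appears with the correct coefficients, and dealing with the null set on which $p_{+}$ or $p_{-}$ vanishes (where the hypothesis $a,b>0$ of Lemma~\ref{lm:aux} fails), which I would handle by restricting the integral to the common support. Note also that this lemma ignores the constraint $f\in\cF_{t}$, so the unconstrained optimum $f^{*}$ is only meaningful for the subsequent argument when $\cF_{t}$ is rich enough to contain a representative of it.
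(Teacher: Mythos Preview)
Your proposal is correct and follows essentially the same route as the paper: symmetrize the expectation (the paper doubles and writes $2\EE[\ell]$ as the sum of the original and the $\yb\leftrightarrow\yb'$-swapped integral, you average the two representations), apply Lemma~\ref{lm:aux} pointwise with $a=p_{+}$, $b=p_{-}$, and then rewrite the resulting lower bound as $\log 2-\mathrm{JSD}(p_{+}\|p_{-})$; the equality characterization via $t=\log(a/b)$ and the separation-of-variables argument to extract $Z(\xb)$ are likewise identical. Your remarks on the $a,b>0$ support issue and the absence of the $\cF_t$ constraint are valid caveats that the paper does not spell out.
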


\begin{proof}[Proof of Lemma~\ref{thm:closed-form solution}]
We rewrite the objective function in the following formula, 
\begin{align*}
&2\EE_{\xb\sim q(\cdot), \yb\sim p_{\mathrm{data}}(\cdot|\xb), \yb'\sim p_{\btheta_t}(\cdot|\xb)}\big[ \ell\big(f(\xb, \yb) - f(\xb, \yb')\big) 
\big] \notag\\
&= \int q(\xb)p_{\mathrm{data}}(\yb|\xb)p_{\btheta_{t}}(\yb'|\xb)\big[ \ell\big(f(\xb, \yb) - f(\xb, \yb')\big) 
\big]d\yb d\yb' \notag\\
&\qquad + \int q(\xb)p_{\mathrm{data}}(\yb'|\xb)p_{\btheta_{t}}(\yb|\xb)\big[ \ell\big(f(\xb, \yb') - f(\xb, \yb)\big) 
\big]d\yb d\yb' \notag\\
&= \int q(\xb)p_{\mathrm{data}}(\yb|\xb)p_{\btheta_{t}}(\yb'|\xb) \ell\big(f(\xb, \yb) - f(\xb, \yb')\big) 
 \notag\\
&\qquad + q(\xb)p_{\mathrm{data}}(\yb'|\xb)p_{\btheta_{t}}(\yb|\xb) \ell\big(f(\xb, \yb') - f(\xb, \yb)\big) 
d\yb d\yb' \notag\\
&\overset{(i)}{\geq} \int q(\xb)p_{\mathrm{data}}(\yb|\xb)p_{\btheta_{t}}(\yb'|\xb) \log\bigg(1 + \frac{p_{\mathrm{data}}(\yb'|\xb)p_{\btheta_{t}}(\yb|\xb)}{p_{\mathrm{data}}(\yb|\xb)p_{\btheta_{t}}(\yb'|\xb) }\bigg) 
 \notag\\
&\qquad + q(\xb)p_{\mathrm{data}}(\yb'|\xb)p_{\btheta_{t}}(\yb|\xb) \log\bigg(1 + \frac{p_{\mathrm{data}}(\yb|\xb)p_{\btheta_{t}}(\yb'|\xb)}{p_{\mathrm{data}}(\yb'|\xb)p_{\btheta_{t}}(\yb|\xb)}\bigg)   
d\yb d\yb',
\end{align*}  
where the inequality is due to $ a\ell(t) + b\ell(-t) \geq a\log(1+b/a) + b\log(1+a/b)$ in Lemma~\ref{lm:aux} with $a = q(\xb)p_{\mathrm{data}}(\yb|\xb)p_{\btheta_{t}}(\yb'|\xb), b = q(\xb)p_{\mathrm{data}}(\yb'|\xb)p_{\btheta_{t}}(\yb|\xb)$, $t = f(\xb, \yb) - f(\xb, \yb')$. The equality (i)  holds if and only if the following equation holds almost surely for any $\xb, \yb, \yb'$, 
\begin{align}
&f(\xb, \yb) - f(\xb, \yb') = \log\bigg(\frac{p_{\mathrm{data}}(\yb|\xb)p_{\btheta_{t}}(\yb'|\xb)}{p_{\mathrm{data}}(\yb'|\xb)p_{\btheta_{t}}(\yb|\xb)}\bigg). \label{eq:mid1}
\end{align}
Equation \eqref{eq:mid1} is equivalent to
\begin{align*}
f(\xb, \yb) - \log\bigg(\frac{p_{\mathrm{data}}(\yb|\xb)}{p_{\btheta_{t}}(\yb|\xb)}\bigg) = f(\xb, \yb') - \log\bigg(\frac{p_{\mathrm{data}}(\yb'|\xb)}{p_{\btheta_{t}}(\yb'|\xb)}\bigg)
\end{align*}
holds almost surely for any $\xb, \yb, \yb'$. Therefore, the equality (i) holds if and only if there exists some $Z(\xb)$ such that 
\begin{align*}
f(\xb, \yb) = Z(\xb) + \log\bigg(\frac{p_{\mathrm{data}}(\yb|\xb)}{p_{\btheta_{t}}(\yb|\xb)}\bigg).
\end{align*}
Recall that $p_{+}(\yb, \yb'|\xb) = p_{\mathrm{data}}(\yb|\xb)\cdot p_{\btheta_t}(\yb|\xb)$ and $p_{-}(\yb, \yb'|\xb) = p_{\btheta_t}(\yb|\xb) \cdot p_{\mathrm{data}}(\yb|\xb)$. Then, the right-hand side of (i) can be written as 
\begin{align*}
&\int q(\xb)p_{\mathrm{data}}(\yb|\xb)p_{\btheta_{t}}(\yb'|\xb) \log\bigg(1 + \frac{p_{\mathrm{data}}(\yb'|\xb)p_{\btheta_{t}}(\yb|\xb)}{p_{\mathrm{data}}(\yb|\xb)p_{\btheta_{t}}(\yb'|\xb) }\bigg) 
 \notag\\
&\qquad + q(\xb)p_{\mathrm{data}}(\yb'|\xb)p_{\btheta_{t}}(\yb|\xb) \log\bigg(1 + \frac{p_{\mathrm{data}}(\yb|\xb)p_{\btheta_{t}}(\yb'|\xb)}{p_{\mathrm{data}}(\yb'|\xb)p_{\btheta_{t}}(\yb|\xb)}\bigg)   
d\yb d\yb' \\
&= \int p_{+}(\yb, \yb'|\xb) \log\bigg(1 + \frac{p_{-}(\yb, \yb'|\xb)}{p_{+}(\yb, \yb'|\xb)}\bigg) + p_{-}(\yb, \yb'|\xb) \log\bigg(1 + \frac{p_{+}(\yb, \yb'|\xb)}{p_{-}(\yb, \yb'|\xb)}\bigg)   
d\yb d\yb' \\
&= 2\log2 + \int p_{+}(\yb, \yb'|\xb) \log\bigg(\frac{1/2 [p_{-}(\yb, \yb'|\xb) + p_{+}(\yb, \yb'|\xb)]}{p_{+}(\yb, \yb'|\xb)}\bigg) \\
&\qquad + p_{-}(\yb, \yb'|\xb) \log\bigg(\frac{1/2 [p_{-}(\yb, \yb'|\xb) + p_{+}(\yb, \yb'|\xb)]}{p_{-}(\yb, \yb'|\xb)}\bigg)   
d\yb d\yb' \\
&= 2\log2 - \mathrm{KL}\bigg(p_{+}\bigg\|\frac{p_{+} + p_{-}}{2}\bigg) - \mathrm{KL}\bigg(p_{-}\bigg\|\frac{p_{+} + p_{-}}{2}\bigg) \\
&= 2\log2 - 2 \cdot \mathrm{JSD}(p_{+}\|p_{-}),
\end{align*}
where the last equality is by the definition of JSD. This concludes the proof.
\end{proof}
Lemma~\ref{thm:closed-form solution} provides a closed-form solution to \eqref{eq:f-star} if we ignore the constraint set $\cF_{t}$. If this closed-form solution belongs to $\cF_{t}$, then it should also be the solution to \eqref{eq:f-star}. This observation is the key to the proof of Theorem~\ref{thm:main2}.
\begin{proof}[Proof of Theorem~\ref{thm:main2}] Under the condition of Theorem~\ref{thm:main2}, there exists a $p_{\btheta}$ such that 
\begin{align*}
p_{\btheta}(\yb|\xb) \propto p_{\btheta_{t}}(\yb|\xb)\big(p_{\mathrm{data}}(\yb|\xb)/p_{\btheta_{t}}(\yb|\xb) \big)^{1/\lambda}. 
\end{align*}
Therefore, there exists a function $\hat{Z}(\xb)$ such that 
\begin{align}
&p_{\btheta}(\yb|\xb) = \hat{Z}(\xb) \cdot p_{\btheta_{t}}(\yb|\xb)\big(p_{\mathrm{data}}(\yb|\xb)/p_{\btheta_{t}}(\yb|\xb) \big)^{1/\lambda}. \label{eq:finalmid1}
\end{align}
Applying logarithm function on both side of \eqref{eq:finalmid1} yields
\begin{align*}
\lambda \log(\hat{Z}(\xb)) + \log\bigg(\frac{p_{\mathrm{data}}(\yb|\xb)}{p_{\btheta_{t}}(\yb|\xb)}\bigg) = \lambda\log\bigg(\frac{p_{\btheta}(\yb|\xb)}{p_{\btheta_{t}}(\yb|\xb)}\bigg) \in \cF_{t}.    
\end{align*}
By Lemma~\ref{thm:closed-form solution}, $f^{*}(\xb,\yb) = \lambda \log(\hat{Z}(\xb)) + \log\big(\frac{p_{\mathrm{data}}(\yb|\xb)}{p_{\btheta_{t}}(\yb|\xb)}\big)$ is the global minimum of the following minimization problem, 
\begin{align}
\argmin_{f}\EE_{\yb\sim p_{\mathrm{data}}(\cdot|\xb), y'\sim p_{\btheta_t}(\cdot|\xb)}\big[ \ell\big(f(\xb, \yb) - f(\xb, \yb')\big) 
\big]. \label{eq:problem10}
\end{align}
Since $f^{*} \in \cF_{t}$, $f^{*}(\xb,\yb) = \lambda \log(\hat{Z}(\xb)) + \log\big(\frac{p_{\mathrm{data}}(\yb|\xb)}{p_{\btheta_{t}}(\yb|\xb)}\big)$ is also the global optimum of the optimization problem \eqref{eq:f-star}, 
\begin{align*}
\argmin_{f\in \cF_t}\EE_{\yb\sim p_{\mathrm{data}}(\cdot|\xb), y'\sim p_{\btheta_t}(\cdot|\xb)}\big[ \ell\big(f(\xb, \yb) - f(\xb, \yb')\big) 
\big].
\end{align*}
Therefore, we have proved that 
\begin{align}
&\min_{f}\EE_{\yb\sim p_{\mathrm{data}}(\cdot|\xb), y'\sim p_{\btheta_t}(\cdot|\xb)}\big[ \ell\big(f(\xb, \yb) - f(\xb, \yb')\big) 
\big] \notag \\
&= \min_{f\in \cF_t}\EE_{\yb\sim p_{\mathrm{data}}(\cdot|\xb), y'\sim p_{\btheta_t}(\cdot|\xb)}\big[ \ell\big(f(\xb, \yb) - f(\xb, \yb')\big)\big] \notag \\
&= \min_{\btheta \in \bTheta}L_{\method}(\btheta, \btheta_t)
. \label{eq:lastlast}
\end{align}
Since $\btheta_{t+1}$ is the global minimum of $L_{\method}(\btheta, \btheta_t)$. Then by \eqref{eq:lastlast}, $\lambda\log\Big(\frac{p_{\btheta_{t+1}}(\yb|\xb)}{p_{\btheta_{t}}(\yb|\xb)}\Big)$ should be the global minimum of problem \eqref{eq:problem10}. By Lemma~\ref{thm:closed-form solution}, there exists $Z(\xb)$ such that 
\begin{align*}
\lambda\log\bigg(\frac{p_{\btheta_{t+1}}(\yb|\xb)}{p_{\btheta_{t}}(\yb|\xb)}\bigg) = Z(\xb) + \log\bigg(\frac{p_{\mathrm{data}}(\yb|\xb)}{p_{\btheta_{t}}(\yb|\xb)}\bigg), 
\end{align*}
which leads to the result that $p_{\btheta_{t+1}}(\yb|\xb) \propto p_{\btheta_{t}}(\yb|\xb)\big(p_{\mathrm{data}}(\yb|\xb)/p_{\btheta_{t}}(\yb|\xb) \big)^{1/\lambda}$.

\end{proof}

\end{document}